\documentclass{article}

     \PassOptionsToPackage{round, sort&compress, authoryear}{natbib}

\usepackage{arxiv}




\usepackage[utf8]{inputenc} 
\usepackage[T1]{fontenc}    
\usepackage{hyperref}       
\usepackage{url}            
\usepackage{booktabs}       
\usepackage{amsfonts}       
\usepackage{nicefrac}       
\usepackage{microtype}      

\usepackage{times,amssymb,bm}
\usepackage{amsmath}
\usepackage{graphicx, float, wrapfig, xcolor}
\usepackage{smile}

\def\KSD{{\rm KSD}}
\def\IPM{{\rm IPM}}
\def\F{{\rm F}}

\title{Stein Neural Sampler}


\author{%
	Tianyang Hu\thanks{These authors contribute equally to this work.} 
	\\
	Purdue University \\
 	\And
 	Zixiang Chen$^*$ \\
 	UCLA\\
 	\And
 	Hanxi Sun$^*$ \\
	Purdue University \\
	\And
	Jincheng Bai\\
	Purdue University \\
 	\And
 	Mao Ye\\
 	UT Austin\\
 	\And
 	Guang Cheng\\
	Purdue University \\
}

\begin{document}
\maketitle

\begin{abstract}
We propose two novel samplers to generate high-quality samples from a given (un-normalized) probability density. Motivated by the success of generative adversarial networks, we construct our samplers using deep neural networks that transform a reference distribution to the target distribution. Training schemes are developed to minimize two variations of the Stein discrepancy, which is designed to work with un-normalized densities. Once trained, our samplers are able to generate samples instantaneously. We show that the proposed methods are theoretically sound and experience fewer convergence issues compared with traditional sampling approaches according to our empirical studies.
\end{abstract}


\section{Introduction} 
A core problem in machine learning and Bayesian statistics is to approximate a complex target distribution given its probability density up to an (unknown) normalizing constant.
Take posterior sampling as an example, the target distribution is proportional to the prior times the likelihood and evaluation is hard without the normalizing constant. 
For decades, researchers have been relying on mainly Markov Chain Monte Carlo (MCMC) \citep{gamerman2006markov} and Variational Bayes (VB) \citep{kingma2013auto, blei2017variational} to evaluate such densities. However, MCMC can be slow to mix and hard to scale to large data sets or complex models. While VB is more computationally feasible, its performance is often hindered by the lack of capacity in the variational family and using Kullback-Leibler (KL) divergence as the training objective \citep{yao18yes}. 

Considering the weaknesses and advantages of MCMC and VB, it is desirable to construct a sampler with larger capacity, better objective and scales for modern machine learning tasks. 
\citet{villani2008optimal} showed that between any two non-atomic distributions, there always exists a measurable transformation. 
Therefore, we propose to learn such a transformation from an easy-to-sample reference distribution to the target distribution by modeling it within a sufficiently rich family of functions, such as neural networks \citep{raghu2016expressive}. 
The expressive power of deep neural networks in modeling complex distributions has been demonstrated by the recent success of Generative Adversarial Network (GAN), where the adversarial game between the generator and the discriminator enables a data-dependent, problem-specific objective that yields astonishing empirical performances never seen before \citep{goodfellow2014, radford2015unsupervised, brock2018large}.

Although both GAN and our sampler aim at generating samples from complicated distributions, GAN learns from a set of true samples (images), while our sampler is trained with the un-normalized true density $q$.
An explicit form of $q$ seems more informative, however, it may involve intractable integrations when measuring the distance between the samples and the target. 
To bypasses these difficulties, we turn to Stein discrepancy, which can serve as a measurement of sample quality.

\begin{figure}[ht]
    \label{fig:parallel}
    \centering
    \includegraphics[width=.92\textwidth]{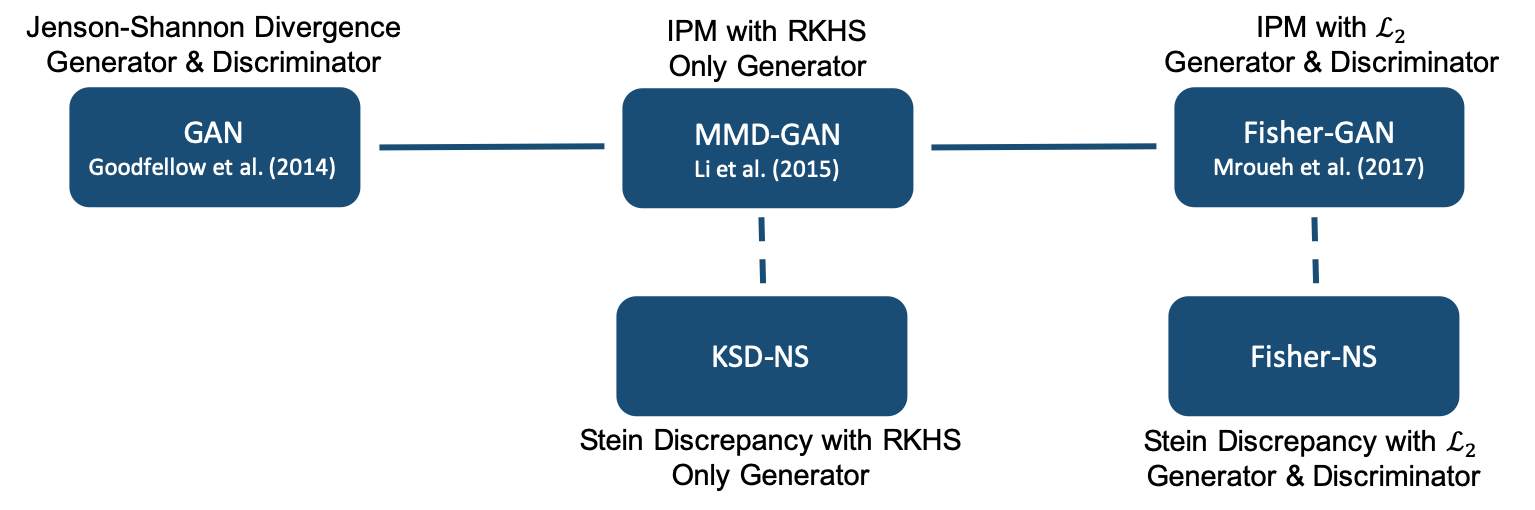}
    \caption{Summary of our proposed approaches and the relationship to existing GAN models. The training objectives and networks to train for each model are labeled in the figure. }
\end{figure}

In this paper, we propose two novel sampling schemes based on Stein discrepancy that can directly learn preservable transformations constructed by neural networks: Kernelized Stein Discrepancy Neural Sampler (KSD-NS) and Fisher Divergence Neural Sampler (Fisher-NS).
The main contribution and advantages of our proposed sampling methods are:
\begin{itemize}
\item
Deep neural network is used to represent the transformation. Once trained, \textbf{independent} samples can be generated \textbf{instantaneously} from forward passes of the generator.
\item
Training is based on Stein discrepancy, which resembles the objective of GAN.
Different discriminative function spaces in Stein discrepancy is investigated. KSD-NS utilizes the unit ball in a reproducing kernel Hilbert space (RKHS) and is easier to train with theoretical guarantee. Fisher-NS enlarges the RKHS to $L_2$ space to have higher potentials. 
\item
Empirical studies show that our neural samplers perform well in both toy examples and real data.  
KSD-NS is more stable and Fisher-NS tends to achieve better results in higher dimensions.
\end{itemize}
The paper is organized as follows. Section \ref{background} introduces some necessary notions for our method. The proposed samplers, KSD-NS and Fisher-NS, are discussed in details in section 3 and 4. Related work is reviewed in Section 5 and experiments results are presented in Section 6. All proof of the theorems, along with more discussions about the methodology and the experiment setting can be found in the supplementary material. 

\section{Background}
\label{background}
\paragraph{Stein's Identity}
Let $q(x)$ be a continuously differentiable density supported on $\cX\subseteq\RR^d$ and $\boldsymbol{f}(x)=[f_1(x), \cdots, f_d(x)]^\top$ be a smooth vector function satisfying some mild boundary conditions. Then, the Stein's identity states that
\begin{align}
    \label{eqn:steq1}
    \EE_{x\sim q}\sbr{S_q(x) \boldsymbol{f}(x)^\top  + \nabla_x \boldsymbol{f}(x)} = \bf{0},  
\end{align}
where $S_q(x) = \nabla_x \log q (x)$ is the score function of $q(x)$. Note that calculating $S_q (x)$ does not require the normalization constant in $q(x)$, which is often intractable in practice. This property makes Stein's identity an ideal tool for handling un-normalized target distributions.

\paragraph{Stein Discrepancy}
Let $p(x)$ be another smooth density supported on $\cX$. 
If in (\ref{eqn:steq1}), the expectation is taking with respect to $p$ instead, the equality will not hold in general. This property naturally induces a distance between the two densities $p(x)$ and $q(x)$ by optimizing the right hand side of (\ref{eqn:steq1}) over all functions $\boldsymbol{f}$ within a function space $\cF$ \citep{gorham2015measuring}, 
\begin{align}
    \label{eqn:ksdexp}
    & {\cD(p, q, \cF)} = \sup_{\boldsymbol{f} \in \cF} \cbr{\EE_{x\sim p} \tr\rbr{S_q(x) \boldsymbol{f}(x)^\top  + \nabla_x \boldsymbol{f}(x)}},  
\end{align}
where $\tr(A)$ is the trace of matrix $A$. If $\cF$ is large enough, ${\cD(p, q, \cF)}=0$ if and only if $p=q$. However, $\cF$ cannot be too large. Otherwise, ${\mathcal D(p, q, \cF)}=\infty$ for any $p\ne q$. 

\paragraph{Kernelized Stein Discrepancy}
Let $\cH^d$ be a RKHS associated with kernel function $k(\cdot, \cdot)$. \citet{liu2016kernelized} showed that if the function space $\cF$ is the unit ball in $\cH^d$, the supremum in (\ref{eqn:ksdexp}) has a closed form solution, kernelized Stein discrepancy (KSD). $\KSD (p, q) = \EE_{x, x' \sim p} \sbr{u_q(x, x')}$, where
\begin{equation}
    \label{eqn:u_q}
    \begin{aligned}
         u_q(x, x') =~ & S_q(x)^\top k(x, x')S_q(x') + S_q(x)^\top \nabla_x k(x, x') 
         + \nabla_x k(x, x')^\top S_q(x')  \\
         & + \tr\rbr{\nabla_{x, x'}k(x, x')}.
    \end{aligned}
\end{equation}
The corresponding optimal discriminative function $\boldsymbol{f}^*$ satisfies $\norm{\boldsymbol{f}^*}_{\cH^d}=1$ and
\[
    \boldsymbol{f}^*(\cdot)\propto \EE_{x \sim p}\sbr{S_q(x) k(x, \cdot)  + \nabla_{x} k(x, \cdot)},
\]

Empirical KSD measures the goodness-of-fit of samples $X=\{x_1, \cdots, x_n\}$ to a density $q(x)$. The minimum variance unbiased estimator can be written as 
\begin{align}
    \label{eqn:KSD_estimator}
    \hat{\KSD}(p, q) &= \frac{1}{n(n-1)} \sum_{i=1}^n\sum_{j \ne i}^n [u_q(x_i, x_j)]
\end{align}
Despite the ease of computation, RKHS is relatively small and may fail to detect non-convergence in higher dimensions \citep{gorham2017measuring}. 

\paragraph{GAN and Integral Probability Metrics (IPM)} 
GAN also learns to transform random noises to high-quality samples. 
The min-max game between the generator and discriminator networks optimally corresponds to minimizing the Jenson-Shannon divergence in the vanilla GAN \citep{goodfellow2014}. Other choices of divergence lead to variants of GAN such as Maximum Mean Discrepancy (MMD) \citep{li2015}, Wasserstein distance \citep{arjovsky2017wasserstein}, Chi-squared distance \citep{mroueh2017fisher}, etc. see an overview by  \citet{mroueh2017sobolev}. The aforementioned distances can all be seen as examples of IPM \citep{muller1997integral}, which measures the distance between two distributions $p$ and $q$ via the largest discrepancy in expectation over a class of well-behaved witness functions $\cF$:
\begin{align}
    \label{eqn:ipm}
    \IPM\rbr{p, q, \cF} =\sup_{f\in\mathcal F} \cbr{\EE_{x\sim p}\sbr{f(x)} - \EE_{x\sim q}\sbr{f(x)}}. 
\end{align}
A broad class of distances can be viewed as special cases of IPM. For instance, choosing all the functions whose integration under $q$ is zero yields Stein discrepancy \citep{gorham2017measuring}. 




\paragraph{Capacity of the Generator}
Deep neural networks as a function space has great flexibility and capacity. 
\citep{cybenko1989approximations} showed that even a single hidden layer can approximate continuous functions on compact subsets of $\RR^n$ arbitrarily well, as long as the number of neurons is large enough.
When modeling distributions, neural networks as generator has great capacity and can well approximate almost any distribution by transforming simple ones such as Gaussian or uniform distribution. 
\cite{lu2020universal} establishes a universal approximation theorem for deep neural networks for expressing distributions. 
When pushed through a sufficiently large neural network, even a one-dimensional distribution can be arbitrarily close to high-dimensional targets in Wasserstein distance \citep{yang2021capacity, perekrestenko2020constructive}.

Stein discrepancy can serve as a bridging tool between true samples and true density. This enables various frameworks in IPM-based GAN to be directly developed in parallel. Motivated by the connection, we propose two new sampling methods (Figure \ref{fig:parallel}).

\section{KSD neural sampler} 
Let $q(x)$ denote the un-normalized target density with support on $\cX\subset\RR^{d}$ and $p_z(z)$ be the reference distribution that generates noises $z \in \RR^{d_0}$. 
Let $G_\theta$ denote our sampler, which is a multi-layer neural network parametrized by $\theta$. Let $p_\theta(x)$ be the underlying density of the generated samples $x = G_\theta(z)$. In summary, our setup is as follows:
\begin{align*}
z &\sim p_z(z),\quad G_\theta(z) = x \sim p_\theta(x)
\end{align*}
We want to train the network parameters $\theta$ so that $p_\theta(x)$ is a good approximation to the target $q(x)$.

\subsection{Methodology}
Evaluating the generated samples $X=\cbr{G_\theta(z_i)}_{i=1}^n$ is equivalent to conducting one-sample goodness-of-fit test. When $q(x)$ is un-normalized, one well-defined testing framework is based on kernelized Stein discrepancy. KSD is the counterpart of maximum mean discrepancy (MMD) in two-sample test \citep{gretton2012}. By choosing the IPM with $\cF$ being an unit ball in RKHS, \citet{li2015} proposed MMD-GAN, which simplifies the GAN framework by eliminating the need of training a discriminator network. As a result, MMD-GAN is more stable and easier to train. 

Motivated by MMD-GAN, we propose to train the generator $G_\theta$ by directly minimizing KSD with respect to $\theta$ using gradient-based optimization. At each iteration, a batch of samples $X$ 
are generated by passing samples from the reference distribution, 
$z_i\sim p_z$, through the generator $G_\theta$. 
The empirical KSD is then calculated by plugging in the samples $X$ to formula (\ref{eqn:KSD_estimator}), 
and $\theta$ is updated to the direction that minimizes the empirical KSD. Algorithm \ref{alg:ksd} summarizes our training procedure.

\begin{algorithm}[!ht]
\caption{KSD-NS}
\begin{algorithmic}[1]
\STATE \textbf{Input:} (un-normalized) target density $q$, reference distribution $p_z$, number of iterations $N$, learning rate $\alpha$, mini-batch size $n$.
\STATE \textbf{Initialize} parameter $\theta$ for the generator network.
\STATE \textbf{For} iteration $t=1,\dots, N$, \textbf{Do}
\STATE \hspace{0.1in} Generate i.i.d. noise $z_1, \dots, z_n\sim p_z$;
\STATE \hspace{0.1in} Obtain fake sample $G_{\theta}(z_1), \dots, G_{\theta}(z_n)$;
\STATE \hspace{0.1in} Compute empirical $\widehat{\KSD}(p_{\theta},q)$;
\STATE \hspace{0.1in} Compute gradients $\nabla_{\theta} \widehat{\KSD}(p_{\theta},q)$;
\STATE \hspace{0.1in} update $\theta \leftarrow \theta - \alpha \nabla_{\theta} \widehat{\KSD}(p_{\theta},q)$;
\STATE \textbf{End For}
\end{algorithmic}\label{alg:ksd}
\end{algorithm}


\subsection{Mini-batch error bound}\label{sec:ksd-err}

The optimization described in Algorithm \ref{alg:ksd} involves evaluating the expectation under $p_\theta$ and it is approximated by the mini-batch sample mean. Natural questions to ask include when the empirical KSD is minimized and what we can say about the population KSD. In the following, we demonstrate that the generalization error is bounded when mini-batch sample size is sufficiently large. 

Let $X_\theta = \{x_1,\cdots, x_n\}$ be one batch of generated samples from our generator $G_{\theta}(\cdot)$ with $\Theta$ being the parameter space. Denote $\hat{\theta}$ and $\theta^*$ as the values minimizing the empirical and population KSD,
\[
    \hat{\theta}  = \argmin_{\theta\in\Theta}\widehat{\KSD}(X_\theta, q), \quad
    \theta^*      = \argmin_{\theta\in\Theta}\KSD(p_{\theta}, q).
\] 
We show that the difference $\KSD\rbr{p_{\hat{\theta}}, q} -  \KSD\rbr{p_{\theta^*}, q}$ is upper bounded in the following theorem. 

\begin{theorem}
\label{thm:ksd}
Assume $q$ and $k(\cdot, \cdot)$ satisfy some smoothness conditions so that the newly defined kernel $u_q$ in (\ref{eqn:u_q}) is $L_1$-Lipschitz with one of the arguments fixed. Under some norm constraints on the weight matrix of each layer of the generator $G_\theta$, then for any $\epsilon>0$, 
\[
    \big|\hat{\KSD}\rbr{p_{\hat{\theta}}, q} - \KSD\rbr{p_{\theta^*}, q}\big| \le \cO\rbr{\frac{C_d}{\sqrt{n}}} + 5\epsilon, 
\]
holds with probability at least $1-3\exp(-\epsilon^2n/2)$
where $C_d$ is a function of the dimension $d$.  
\end{theorem}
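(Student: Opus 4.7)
The plan is to reduce the theorem to a one-sided uniform deviation bound over $\Theta$ and then control that deviation by combining a concentration inequality with a Rademacher-complexity estimate for the composition of the generator with the Stein kernel $u_q$.

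First, by optimality of $\theta^{*}$ for the population objective, $\KSD(p_{\theta^{*}},q) \le \KSD(p_{\hat\theta},q)$, hence
\[
\KSD(p_{\theta^{*}},q) - \hat\KSD(p_{\hat\theta},q) \;\le\; \KSD(p_{\hat\theta},q) - \hat\KSD(p_{\hat\theta},q) \;\le\; \sup_{\theta\in\Theta}\bigl(\KSD(p_\theta,q) - \hat\KSD(p_\theta,q)\bigr),
\]
so it suffices to bound this supremum by $\cO(C_d/\sqrt{n})+\epsilon$ with the stated probability.

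Second, I would view $\hat\KSD(p_\theta,q)$ as a second-order U-statistic in the i.i.d.\ latent inputs $z_1,\ldots,z_n$ with kernel $h_\theta(z,z') := u_q\bigl(G_\theta(z),G_\theta(z')\bigr)$. The spectral/Frobenius-norm constraints on the weight matrices confine the generator outputs to a bounded set, which combined with the $L_1$-Lipschitz and smoothness assumptions on $u_q$ yields a uniform bound $\|h_\theta\|_\infty \le M$ independent of $\theta$. Swapping a single $z_i$ alters the U-statistic by at most $4M/n$, so the random variable $\Phi(z_1,\ldots,z_n):=\sup_{\theta}\bigl(\KSD(p_\theta,q) - \hat\KSD(p_\theta,q)\bigr)$ has bounded differences $c_i = 4M/n$; McDiarmid's inequality then yields
\[
\Phi \;\le\; \EE[\Phi] + \epsilon \qquad \text{with probability at least } 1-\exp(-\epsilon^2 n/2),
\]
after absorbing $M$ into the constant hidden in the definition of $\epsilon$.

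Third, I would control $\EE[\Phi]$ by symmetrization. Hoeffding's decomposition of the U-statistic (or direct decoupling of the two arguments) bounds $\EE[\Phi]$, up to a universal constant, by the Rademacher complexity of $\cH_\Theta := \{h_\theta : \theta\in\Theta\}$. The $L_1$-Lipschitz property of $u_q$ combined with the Talagrand contraction principle then reduces this to the coordinatewise Rademacher complexity of the generator class $\{G_\theta : \theta\in\Theta\}$, which for a norm-constrained multilayer network admits a Bartlett-style bound of order $C_d/\sqrt{n}$; here $C_d$ packages the depth, width, weight-norm bounds, and ambient-dimension dependence.

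The main obstacle is the third step: verifying that $h_\theta$ really is globally Lipschitz of order $\cO(L_1)$ in the two generator outputs on the bounded range produced by $G_\theta$. The kernel $u_q$ in (\ref{eqn:u_q}) mixes $S_q(x)$, $k(x,x')$, and the first and second derivatives $\nabla_x k$ and $\nabla_{x,x'} k$, so one needs a careful modulus-of-continuity argument on the compact output set, using the weight constraints to control both $G_\theta$ and its Jacobian. A secondary subtlety is that U-statistic symmetrization must be performed via decoupling so that the rate does not degrade below $1/\sqrt{n}$; once these two pieces are in place, plugging the neural-network Rademacher bound into the McDiarmid inequality closes the argument.
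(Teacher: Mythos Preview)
Your proposal is correct and matches the paper's proof essentially step for step: McDiarmid on the supremum of the centered U-statistic, decoupling by conditioning on one argument to reduce to an i.i.d.\ average, Talagrand-type contraction through the $L_1$-Lipschitz kernel $u_q$, and a norm-based Rademacher bound for the generator class (the paper cites Golowich--Rakhlin--Shamir rather than Bartlett, but the role is identical). The only detail you left implicit is that the vector-valued contraction step introduces a factor $2^{d-1}d$, which is precisely the $C_d$ in the statement; otherwise your outline and identified obstacles coincide with the paper's argument.
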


\paragraph{Remark} The norm constraints on neural networks in theorem \ref{thm:ksd} require the norm of each weight matrix to be bounded. Commonly used norms are Frobenius norm, $W_{p,q}$ norm and other matrix norms \citep{golowich2017size}. More details are in the supplementary section \ref{a1}.


 Theorem \ref{thm:ksd} implies that in practice, with enough batch size, the generator $G_\theta$ can be trusted if we observe a small KSD loss. However, we want to raise the following point. 
When KSD is small, it means that within the area of generated samples, the score function of $p_\theta$ matches the target score function $S_q$ well. An almost-zero empirical KSD does not necessarily imply capturing all the modes or recovering all the support of the true density. 

\subsection{Metrization of weak convergence}
 KSD based on commonly used kernels, such as Gaussian kernel, Matern kernel, fail to detect non-convergence when $d \ge 3$ \citep{gorham2017measuring}. 
The issue of KSD with Gaussian kernel in higher dimensions can be traced back to the fast decaying kernel function. If we choose a heavy-tail kernel, such as Inverse Multi-Quadratic (IMQ) kernel, the corresponding KSD can detect non-convergence. The following theorem is from \citet{gorham2017measuring}.

\begin{theorem} Under IMQ kernel $k(x,y) = \rbr{c^2 + \norm{x-y}_2^2}^\beta$ where $c>0$ and $\beta \in (-1,0)$, $\KSD(p_{\theta},q) \rightarrow 0$ implies $p_{\theta} \overset{d}{\longrightarrow} q$.
\end{theorem}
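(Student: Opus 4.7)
The plan is to follow the classical weak-convergence-by-tightness strategy, exploiting the variational representation
\[
\KSD(p,q) = \sup_{\|\boldsymbol{f}\|_{\cH^d}\le 1}\EE_{x\sim p}\sbr{S_q(x)^\top \boldsymbol{f}(x)+\nabla_x\cdot\boldsymbol{f}(x)},
\]
which follows from (\ref{eqn:ksdexp}) and the reproducing property, with $\cH^d$ the product RKHS of the IMQ kernel. I would then proceed in three stages: tightness of $\{p_\theta\}$, identification of any weak subsequential limit as $q$, and a characteristic-kernel uniqueness step.

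\textbf{Tightness.} For any smooth bounded Lipschitz test function $h$ on $\RR^d$, I would construct a vector field $\boldsymbol{u}_h\in\cH^d$ solving the Stein equation $S_q^\top \boldsymbol{u}_h+\nabla\cdot\boldsymbol{u}_h=h-\EE_q[h]$, with RKHS norm controlled as $\|\boldsymbol{u}_h\|_{\cH^d}\le C\|h\|_{\mathrm{Lip}}$ for a constant $C$ depending only on $c$, $\beta$, and mild regularity of $q$ (e.g.\ distant dissipativity of $-\log q$). Specializing to $h=h_R$, a smooth approximation of $\mathbf{1}\{\|x\|>R\}$, and plugging $\boldsymbol{u}_{h_R}/\|\boldsymbol{u}_{h_R}\|_{\cH^d}$ into the variational form yields
\[
p_\theta\rbr{\|x\|>R}\le C_R\cdot \KSD(p_\theta,q)+\eta(R),\qquad \eta(R)\to 0 \text{ as } R\to\infty.
\]
Combined with $\KSD(p_\theta,q)\to 0$ this delivers tightness of the family $\{p_\theta\}$.

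\textbf{Identification and uniqueness.} Let $p_{\theta_{n_k}}$ be any weakly convergent subsequence with limit $\tilde p$. For each fixed $\boldsymbol{f}$ in the unit ball of $\cH^d$, the function $x\mapsto S_q(x)^\top\boldsymbol{f}(x)+\nabla_x\cdot\boldsymbol{f}(x)$ is continuous, and the RKHS norm bound together with tightness makes it uniformly integrable along the subsequence. Passing to the limit gives $\EE_{\tilde p}[S_q^\top\boldsymbol{f}+\nabla\cdot\boldsymbol{f}]=0$ for every admissible $\boldsymbol{f}$, hence $\KSD(\tilde p,q)=0$. Since the IMQ kernel with $\beta\in(-1,0)$ is integrally strictly positive definite on $\RR^d$ (characteristic), this forces $\tilde p=q$. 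Every weak subsequential limit of the tight family thus equals $q$, so the whole sequence converges weakly to $q$.

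\textbf{Main obstacle.} The hard step is the tightness estimate: one must solve the Stein PDE and show that its solution genuinely lies in the IMQ RKHS with a norm bound uniform in $R$. This is exactly where the decay exponent $\beta\in(-1,0)$ is used---the IMQ kernel's heavy polynomial tails make $\cH^d$ rich enough to contain Stein solutions of tail-sensitive test functions, whereas the exponentially decaying Gaussian or Matern kernels yield RKHS that are too small, which is the same reason their KSDs fail to detect non-convergence in $d\ge 3$. Once this quantitative Stein-solver bound is in hand, the remaining pieces are routine weak-convergence bookkeeping and a standard characteristic-kernel argument.
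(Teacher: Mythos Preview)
The paper does not prove this theorem; the sentence immediately preceding it reads ``The following theorem is from \citet{gorham2017measuring},'' and no argument is supplied. So there is no in-paper proof to match against, only the cited one.

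Relative to the actual Gorham--Mackey argument, your two-stage skeleton---first tightness, then identification of every subsequential weak limit as $q$ via the characteristic property of the IMQ kernel---is exactly right and matches theirs. The genuine gap is in how you propose to obtain tightness. You plan to solve the Stein equation $S_q^\top \boldsymbol{u}_h+\nabla\!\cdot\boldsymbol{u}_h=h-\EE_q[h]$ \emph{inside} the IMQ product RKHS with a uniform bound $\|\boldsymbol{u}_h\|_{\cH^d}\le C\|h\|_{\mathrm{Lip}}$. Such a bound would make the IMQ KSD dominate the bounded-Lipschitz (hence Wasserstein-$1$) metric outright, which is strictly stronger than what is established or needed; the standard Stein solution built from the Langevin semigroup is not known to land in $\cH^d$, and there is no mechanism in sight that would force a uniform RKHS-norm bound. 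The step you correctly flag as the ``main obstacle'' is, in the form you state it, not merely hard but open.

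Gorham and Mackey sidestep the Stein equation entirely. For tightness they hand-construct a single explicit family of witness vectors out of translates and coordinate multiples of the IMQ kernel itself, tuned so that the Stein image $S_q^\top\boldsymbol{f}+\nabla\!\cdot\boldsymbol{f}$ has slow enough polynomial decay at infinity---this is precisely where $\beta>-1$ is used---to keep $\EE_{p_\theta}[S_q^\top\boldsymbol{f}+\nabla\!\cdot\boldsymbol{f}]$ bounded away from zero whenever mass escapes, while $\|\boldsymbol{f}\|_{\cH^d}$ stays bounded. Your intuition about the role of $\beta\in(-1,0)$ is on target; the working proof simply exploits it through one concrete witness family rather than a general RKHS Stein-solver estimate.
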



By choosing the appropriate kernel, the KSD-NS is theoretically sound. However, in practice, the performance of our model might deteriorate as dimension goes higher. In the next section, we introduce the Fisher divergence neural sampler, which expands RKHS to $\cL_2$ space to have better discriminative power in higher dimensions.

\section{Fisher neural sampler} 
The ease of computation for kernel methods does not come free. 
RKHS is a relatively small function space and the expressive power decays when dimension goes higher. 
For instance, in generating images, empirical performance of MMD-GAN \citep{li2015} is usually not comparable to more computationally intensive GANs like Wasserstein GAN \citep{arjovsky2017wasserstein, gulrajani2017improved}. 
In this section, we go beyond kernels and introduce a stronger divergence between distributions.

\subsection{Methodology}
Instead of an unit-ball in RKHS, we expand the function space $\cF$ in Stein discrepancy (\ref{eqn:ksdexp}) to be the $\cL_2$ space. Next, we approximate $\cL_2$ functions by another multi-layer neural network $\boldsymbol{f}_\eta(x)$ parametrized by $\eta$. (\ref{eqn:ksdexp}) becomes:
\[
    {\cD_\eta(p_\theta, q)} 
    = \sup_{\eta} \cbr{ \EE_{x\sim p_\theta}\tr\rbr{S_q(x) \boldsymbol{f}_\eta(x)^\top + \nabla_x \boldsymbol{f}_\eta(x)}}.
\]
Neural networks as functions are not square integrable by nature, since they don't vanish at infinity by default. To impose the $\cL_2$ constraint, we add an $\cL_2$ penalty term and thus our loss function becomes
\begin{align*}
    L_{\eta,\lambda}(p_\theta, q)=\mathcal{D}_\eta(p_\theta, q) - \lambda\,\EE_{x\sim p_\theta}\sbr{\boldsymbol{f}_\eta^\top\boldsymbol{f}_\eta},
\end{align*}
where $\lambda$ is a tuning parameter. Our training objective is
$$
    \min_{\theta}\max_{\eta}L_{\eta,\lambda}(p_\theta, q).  
$$ 
The \textit{ideal} training scheme is:
\begin{description}
\item[step 1] Initialize the generator $G_\theta$ and the discriminator $f_\eta$, both with \textit{infinite} capacity.
\item[step 2] Fix $\theta$, train $\eta$ to \textit{optimal}.
\item[step 3] Fix $\eta$, train $\theta$ for one step.
\item[step 3] Repeat step 2 and 3 until \textit{global} convergence.
\end{description}
The \textit{ideal} part mainly refers to training the discriminator to optimal and the discriminator itself has large enough capacity. The proposed training scheme is similar to that in Wasserstein GAN \citep{arjovsky2017wasserstein} and Fisher GAN \citep{mroueh2017fisher}. Under the optimality assumptions, next we show the extension from RKHS to $\cL_2$ indeed introduces a stronger convergence.

\subsection{Optimal discriminator} \label{sec:fisher_div}
The Fisher divergence between two densities $p$ and $q$ is defined as
\[
    \F(p\, ||\, q) = \EE_{x\sim p}\norm{\nabla_x \log(p) - \nabla_x \log(q)}^2_2.
\]
We now show that Fisher divergence is the corresponding loss of our ideal training scheme, provided that the discriminator network has enough capacity and is trained to global optimal. 
\begin{theorem}
\label{fisher}
The optimal discriminator function is 
$
    \frac{1}{2\lambda}\rbr{S_q(x) - S_p(x)}.
$ 
Training the generator with the optimal discriminator corresponds to minimizing the fisher divergence between $p_\theta$ and $q$. The corresponding optimal loss is
\[
    \frac{1}{4\lambda}  \EE_{x\sim p_\theta} \norm{S_q(x) - S_{p_\theta}(x)}_2^2.
\]
\end{theorem}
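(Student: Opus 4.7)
The plan is to view $L_{\eta,\lambda}(p_\theta,q)$ as a functional of $\boldsymbol{f}$ alone (with $\theta$ frozen), simplify it to a concave quadratic in $\boldsymbol{f}$ by invoking Stein's identity for $p_\theta$, and then maximize pointwise to read off $\boldsymbol{f}^*$ and the optimal value.

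First I would apply Stein's identity (\ref{eqn:steq1}) with $q$ replaced by $p_\theta$ to the test function $\boldsymbol{f}_\eta$ (assuming the usual boundary/integrability conditions that make the boundary terms vanish), which gives
\begin{align*}
\EE_{x\sim p_\theta}\tr\rbr{\nabla_x \boldsymbol{f}_\eta(x)} = -\,\EE_{x\sim p_\theta}\tr\rbr{S_{p_\theta}(x)\boldsymbol{f}_\eta(x)^\top}.
\end{align*}
Substituting this into the definition of $\mathcal{D}_\eta(p_\theta,q)$ and using $\tr(S\boldsymbol{f}^\top)=S^\top \boldsymbol{f}$ collapses the divergence term to $\EE_{x\sim p_\theta}[(S_q(x)-S_{p_\theta}(x))^\top \boldsymbol{f}_\eta(x)]$. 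Hence the objective rewrites as
\begin{align*}
L_{\eta,\lambda}(p_\theta,q) = \EE_{x\sim p_\theta}\sbr{(S_q-S_{p_\theta})^\top \boldsymbol{f}_\eta - \lambda\,\boldsymbol{f}_\eta^\top \boldsymbol{f}_\eta}.
\end{align*}

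Next I would complete the square inside the expectation. For each $x$ the integrand is $-\lambda\,\|\boldsymbol{f}_\eta(x) - (2\lambda)^{-1}(S_q-S_{p_\theta})\|_2^2 + (4\lambda)^{-1}\|S_q-S_{p_\theta}\|_2^2$, which, under the assumption that the network class is rich enough to represent arbitrary $\cL_2$ vector fields, is maximized by
\begin{align*}
\boldsymbol{f}^*(x) = \frac{1}{2\lambda}\rbr{S_q(x)-S_{p_\theta}(x)}.
\end{align*}
Plugging $\boldsymbol{f}^*$ back yields $L(\theta) = \tfrac{1}{4\lambda}\EE_{x\sim p_\theta}\|S_q(x)-S_{p_\theta}(x)\|_2^2$, which is precisely $(4\lambda)^{-1}\F(p_\theta\,\|\,q)$.

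The main obstacle is really bookkeeping rather than substance: one must be explicit about the regularity assumptions under which Stein's identity can be applied to $p_\theta$ with test function $\boldsymbol{f}_\eta$ (smoothness, decay at infinity, or a suitable Sobolev condition), and about the claim that $\boldsymbol{f}_\eta$ ranges over a set dense enough in $\cL_2(p_\theta)$ to realize the pointwise maximizer. Once these are granted, the remaining calculation is a one-line pointwise optimization of a concave quadratic, which is why the theorem should be presented as a clean consequence of Stein's identity plus completing the square.
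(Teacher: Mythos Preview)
Your proof is correct and shares the same first move as the paper: apply Stein's identity for $p_\theta$ to rewrite the objective as $\EE_{p_\theta}\big[(S_q-S_{p_\theta})^\top \boldsymbol{f} - \lambda\,\boldsymbol{f}^\top\boldsymbol{f}\big]$. From there the paper takes a slightly different path. Instead of completing the square pointwise, it bounds the cross term by a Cauchy--Schwarz inequality at the level of expectations, $|\EE_{p_\theta}[\boldsymbol{g}^\top\boldsymbol{f}]|\le (\EE_{p_\theta}\|\boldsymbol{g}\|_2^2\cdot\EE_{p_\theta}\|\boldsymbol{f}\|_2^2)^{1/2}$ with $\boldsymbol{g}=S_q-S_{p_\theta}$, and then applies the scalar inequality $\sqrt{ab}-\lambda b\le a/(4\lambda)$; the two equality cases (proportionality $\boldsymbol{f}\propto\boldsymbol{g}$ from Cauchy--Schwarz, and the norm condition from the second step) combine to give $\boldsymbol{f}^*=(2\lambda)^{-1}(S_q-S_{p_\theta})$. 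Your pointwise argument is more direct and dispenses with the auxiliary Cauchy--Schwarz lemma; the paper's route is a touch more roundabout but identifies the optimizer via equality conditions rather than via a density/attainability claim for the pointwise maximizer. Both versions rely on the same regularity assumptions you correctly flag.
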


One observation is that when our sampling distribution $p_\theta$ is close to the target $q$, the discriminator function $f_\eta$ tends to zero. Naturally, $f_\eta$ can be used as an diagnostic tool to evaluate how well our neural sampler is working.

\paragraph{Fisher Divergence vs. KSD}
Fisher divergence dominates KSD in the following sense \citep{liu2016kernelized}:
$$
    \KSD(p, q) \le {\sqrt{\EE_{x, x'\sim p}[k(x, x')^2]}} \cdot \F(p||q).
$$
In fact, it is stronger than a lot of other distances between distributions, such as total variation, Hellinger distance, Wasserstein distance, etc \citep{ley2013stein}. 

\paragraph{Fisher Divergence vs. KL Divergence}
KL divergence is not symmetric and usually not stable for optimization due to its division format, while KSD and Fisher divergence are more robust. 
Under mild conditions, according to Sobolev inequality, Fisher divergence is a stronger distance than KL divergence.
Moreover, when the normalizing constant of the target density is unknown, the KL divergence can only be calculated up to additive constant. Thus, it is hard to quantify how well the KL divergence is being minimized. In comparison, both KSD and Fisher divergence only rely on the score function and hence, the values are directly interpretable as goodness-of-fit test statistics.

The optimality assumption on discriminator may seem unrealistic. However, 
\begin{itemize}
    \item Optimality of discriminator is an usual assumption for all GAN models mentioned in this paper. Optimization in deep neural networks are highly non-convex and the mini-max game in GAN model is extremely hard to characterize. Losing the assumption require tremendous amount of work \citep{arora2017generalization}.
    \item Many results suggest that deep neural networks with large capacity usually generalize well. Bad local minimum is scarce and more efficient optimization tools to escape saddle points are being developed \citep{kawaguchi2016deep, lecun2015deep, jin2017escape}.
\end{itemize}

In practice, we suggest choosing a large enough discriminator network and after each iteration of $\theta$, we train $\eta$ for multiple times. Algorithm \ref{alg:FD} summarizes our training procedure.

\begin{algorithm}[!ht]
\caption{Fisher-NS}
\begin{algorithmic}[1]
\STATE \textbf{Input:} un-normalized density $q(x)$, noise density $p_z(z)$, number of  step 2 iterations $m$, number for step 4 iterations $T$, tuning parameter $\lambda$, learning rate $\alpha_{1},\alpha_{2}$, mini-batch size $n$.
\STATE \textbf{Initialize} parameter $\theta$ and $\eta$ for both neural networks.
\STATE \textbf{For} iteration $t=1,\ldots, T$, \textbf{Do}
\STATE \hspace{0.1in} Generate i.i.d. noise inputs $z_1, \ldots, z_n$ from $p_z$;
\STATE \hspace{0.1in} Obtain fake sample $G_{\theta}(z_1), \cdots, G_{\theta}(z_n)$;
\STATE \hspace{0.1in} \textbf{For} $h = 1,\ldots, m$, \textbf{Do}:
\STATE \hspace{0.2in} Compute empirical loss $L_{\eta, \lambda}(p_{\theta},q)$;
\STATE \hspace{0.2in} Compute gradient $\triangledown_{\eta} L_{\eta, \lambda}(p_{\theta},q)$;
\STATE \hspace{0.2in} $\eta \leftarrow \eta + \alpha_{1} \triangledown_{\eta} L_{\eta, \lambda}(p_{\theta},q)$;
\STATE \hspace{0.1in} \textbf{End For}
\STATE \hspace{0.1in} Compute empirical loss $L_{\eta, \lambda}(p_{\theta},q)$;
\STATE \hspace{0.1in} Compute gradient $\triangledown_{\theta} L_{\eta, \lambda}(p_{\theta},q)$;
\STATE \hspace{0.1in} $\theta \leftarrow \theta - \alpha_{2} \triangledown_{\theta} L_{\eta, \lambda}(p_{\theta},q)$;
\STATE \textbf{End For}
\end{algorithmic}\label{alg:FD}
\end{algorithm}

\paragraph{Discriminator Initialization}
For more efficient training, we can initialize $\boldsymbol{f}_\eta$ around the optimal $\boldsymbol{f}^*$ from the KSD case. 
Fisher-NS is an extension from KSD-NS, where the discriminative function space is enlarged from RKHS to $\cL_2$ space. Let $\boldsymbol{f}^*_{\cH}$ be the optimal discriminative function in the RKHS. To make the training process of Fisher-NS more efficient, we can initialize the discriminative function around $\boldsymbol{f}^*_{\cH}$. Let $\boldsymbol{f}_\xi$ be a neural network function parametrized by $\xi$  and closely initialized around 0 and let $\boldsymbol{f} = \boldsymbol{f}_\xi + \boldsymbol{f}^*_{\cH}$ be the discriminative function to be optimized. Then the objective becomes
\begin{align}
\label{eqn:init}
      \sup_{\boldsymbol{f}_\xi\in\cL_2} \cbr{ \EE_{x\sim p_\theta}\tr\rbr{S_q(x) (\boldsymbol{f}_\xi + \boldsymbol{f}^*_{\cH})^\top + \nabla_x (\boldsymbol{f}_\xi + \boldsymbol{f}^*_{\cH}}}= {\cD_{\xi}(p_\theta, q)} + \KSD
\end{align}
Since all the operations of the discriminative function in the objective is linear, we can separate the objective into the initialization part and the KSD part. Therefore, initializing $\boldsymbol{f}_\eta$ to be around $\boldsymbol{f}^*_{\cH}$ is equivalent to adding $\KSD$ into the training objective and initialize the network close to zero. The KSD in (\ref{eqn:init}) can be thought as a regularization term and to make this formulation more flexible, we adapt our training objective to 
\[
\cD_{\xi, \lambda} = {\cD_{\xi}(p_\theta, q)} + \gamma\cdot \KSD
\]
In practice, we gradually decay $\gamma$ from 1 to 0 along the training process.

\section{Related work}



The fusion of deep learning and sampling is not new.
\citet{song2017nice} proposed A-NICE-MC, where the proposal distribution in MCMC is, instead of domain-agnostic, adversarially trained using neural networks. The authors show that A-NICE-MC is faster than Hamilton Monte Carlo (HMC) \citep{neal2011mcmc} in terms of effective sample size. Our neural sampler is fundamentally different from MCMC since we are training a preservable transformation. Once trained, we could generate independent samples instantaneously. 

In variational inference, \citet{rezende2015variational} greatly enhanced its flexibility by constructing the variational family through normalizing flow, where a simple initial density is transformed into a more complex one by a sequence of invertible transformations. The invertibility condition posts lots of constraints on the transformation and special forms must be taken \citep{dinh2014nice, dinh2016density}.
In contrast, our sampler is more flexible and the initial density does not have to be of the same dimension as the target. Another essential difference lies in the objective. Our samplers are trained with Stein discrepancy while variational inference often relies on KL divergence. 
\citet{ranganath2016operator} proposed a more general variational operator which include Stein discrepancy as a special case. In comparison, we focus on realizing Stein discrepancy with different discriminative function spaces and develop specific algorithms. 

Another type of sampling methods that targets KL divergence are Stein variational gradient descent (SVGD) \citep{liu2016stein} and its network version, Stein GAN \citep{wang2016learning}. 
SVGD sequentially updates a set of particles to approximately minimize the KL divergence, while Stein GAN trains a neural network to sample from the target distribution by iteratively adjusting the weights according to the SVGD updates. 
Although Stein GAN shares a similar setup to our KSD-NS, they have completely different objectives. 
As discussed in section \ref{sec:fisher_div}, there are many advantages of KSD and Fisher divergence over KL divergence. 
Moreover, the objective in SVGD only approximately minimizes the KL divergence, by using a projected gradient of KL into a RKHS space. In comparison, the KSD gradient can be estimated unbiasedly.
As shown in section \ref{sec:ksd-err}, KSD-NS is theoretically sound -- with a sufficient batch size, empirical KSD loss converging to zero implies weak convergence of the sampling distribution. 



KSD-NS is not the only method designed to directly minimize KSD. \citet{chen2018stein} proposed the Stein Points, a sequential sampling method that generates new point (sample) by minimizing the empirical KSD given all the previous points. However, finding the global optimal is challenging and it is not feasible in generating large amount of samples. 
In comparison, our KSD-NS directly minimizes KSD by gradient-based algorithms on neural network weights which enables instantaneous sampling.


\section{Experiments} 
\label{sec:simulation}

We evaluate our neural samplers on both toy examples and real world problems, and compare the performance with Stein GAN, SVGD and classic sampling methods, such as stochastic gradient Langevin dynamics (SGLD) \citep{WelingM2011bayesian} and HMC \citep{neal2011mcmc}. Results on 2-dimensional Gaussian mixtures showed that our methods are capable of capturing detailed local structures in the target distribution. Comparing with other benchmarking methods, our neural samplers also have superior ability to handle ``multimodality" and avoid ``local trap". Futhermode, when applied to high dimensional real world data, our methods also achieve better test accuracies. All the experiment details are stated in section \ref{simulation} of the supplementary material.

\begin{figure*}
    \centering
    \includegraphics[width=.8\textwidth]{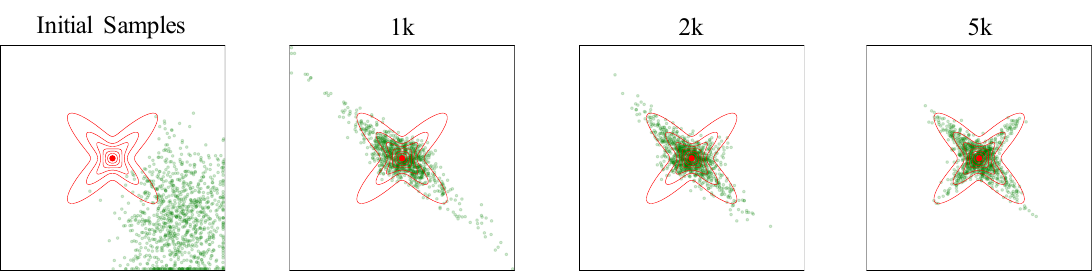}
    \caption{Toy example with 2D Gaussian mixture that trained with KSD-NS. The red contours represent the target distribution and the green dots are generated samples. From left to right are the initialization, 1000, 2000 and 5000 iterations correspondingly.}
    \label{fig:KSD_sgx}
\end{figure*}

\paragraph{Unimodal Gaussian mixtures} 
The first toy example is a unimodal 2-dimensional Gaussian mixtures. The target distribution is 
$
    q(x) = 0.5\cdot \mathcal{N} \rbr{x;~ \bm{0},~I_2(0.8)} + 0.5\cdot \mathcal{N}\rbr{x;~ \bm{0},~I_2(-0.8)}
$ where ${I}_2(\rho)$ denotes the $2\times2$ matrix with 1 on the diagonal and $\rho$ as off-diagonal elements. Figure \ref{fig:KSD_sgx} shows how the sampling distribution evolves during the training with KSD-NS and Fisher-NS yields a similar result. This example shows that the detailed local structure of this target distribution is well captured by our methods. 

\paragraph{Multimodal Gaussian mixtures} 
In practice, sampling from multimodal distributions is usually very challenging, especially when the modes are far from each other. Even for GAN \citep{goodfellow2014}, a mixture of Gaussians with well separated modes could be hard when trained with true samples \citep{metz2017unrolled}. In this toy example, we set the target to be a equal mixture of 8 2-dimensional standard Gaussian components that are equally spaced on a circle of radius 15. Our neural samplers are compared with SVGD, Stein GAN and HMC on this example. To make the task more difficult, we set the initial stage for all the methods to be far away from any of the true modes. Figure \ref{fig:8Gaussian} shows the samples at different stage. For fair comparison, the same network configuration and initialization are shared between network based methods (Stein GAN and our KSD-NS and Fisher-NS), and SVGD particles are initialized with the same initial samples. Our neural samplers perform similarly in this case, and hence, we only show one trajectory of the training. SVGD and Stein GAN are experiencing a similar mode dropping issue, and thus, only SVGD samples are shown here. In the HMC case, the chain is initialized at the sample mean of the initial samples of other methods and after burn-in, 1000 consecutive samples are showed at each stage. It also experience a severe mode dropping problem. The result suggests that the our proposed methods are more powerful in exploring the global structures.

\begin{figure*}
    \centering
    \includegraphics[width=.9\textwidth]{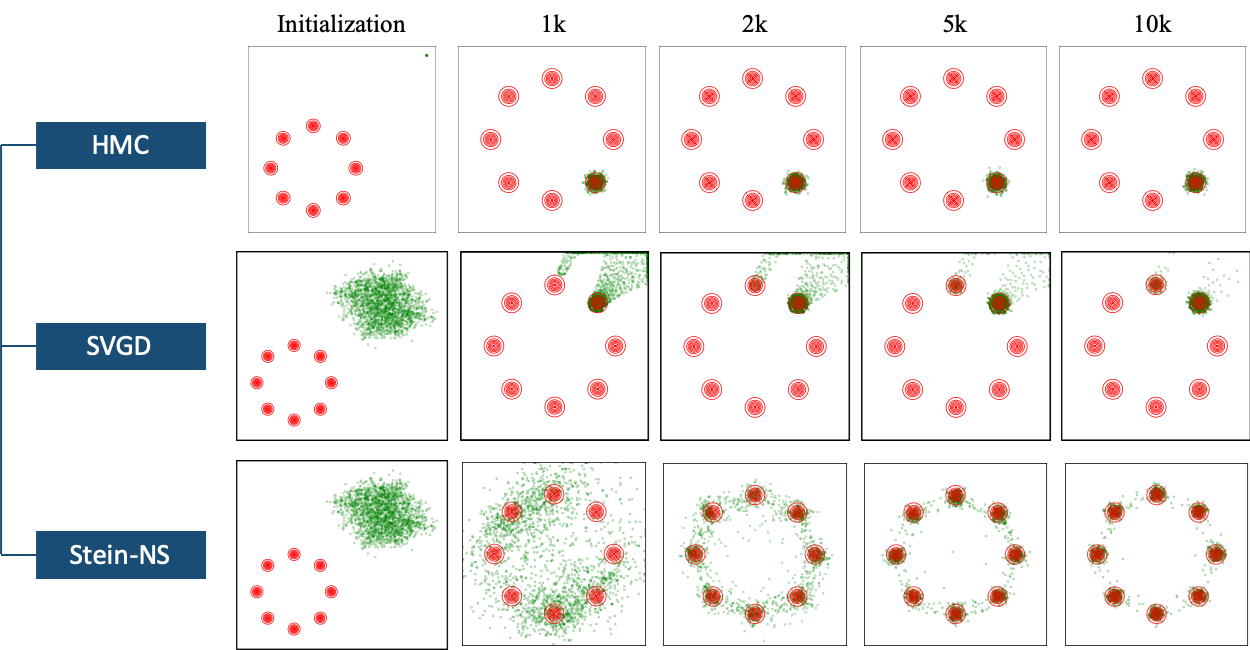}
    \caption{Toy example with multimodal 2D Gaussian mixtures. The red contour shows the target distribution and the green dots are generated sample(s) / particles at iteration 0, 1k, 2k, 5k and 10k respectively.  }
    \label{fig:8Gaussian}
\end{figure*}




\paragraph{Bayesian logistic regression on the Covertype data}
The covertype data set \citep{blackard1998forest} contains 581,012 observations of 54 features and a binary response. We use the same setting as \citet{liu2016stein}, where the prior of the weights is $p(w|\alpha) = \mathcal{N}(w;~0, \alpha^{-1})$ and $p(\alpha) = \rm{Gamma}(\alpha;~ 1, 0.01)$. The data set is randomly split into the training set (80\%) and testing set (20\%).
Our methods are compared with Stein GAN, SVGD, SGLD and doubly stochastic variational inference (DSVI) \citep{Titsias2014doubly} on this data set. For SGLD, DSVI and SVGD, the model is trained with 3 epoches of the training set (about 15k iterations), while for neural network based methods (Fisher-NS, KSD-NS and Stein GAN), we run them until convergence. Table \ref{Covertype} shows the mean and standard deviation of classification accuracies on the testing set with 30 replications of each method. The KSD-NS has a lower variances across different replications, but Fisher-NS achieves a higher accuracy on average. 

\begin{table}[h]
\caption{Test accuracies for Bayesian logistic regression on the Covertype dataset. }
\label{Covertype}
\vskip 0.1in
\begin{center}
\begin{small}
\begin{sc}
\begin{tabular}{ccc}
\toprule
SGLD                 & DSVI                 & SVGD                 \\
75.09\% $\pm$ 0.20\% & 73.46\% $\pm$ 4.52\% & 74.76\% $\pm$ 0.47\% \\
\midrule
SteinGAN             & KSD-NS               & Fisher-NS            \\
75.37\% $\pm$ 0.19\% & 76.17\% $\pm$ 0.21\% & 76.22\% $\pm$ 0.43\% \\
\bottomrule
\end{tabular}
\end{sc}
\end{small}
\end{center}
\vskip -0.1in
\end{table}

\section{Conclusion} 
In this paper, we propose two novel frameworks that directly learns preservable transformations from random noise to target distributions. KSD-NS enjoys theoretical guarantee and 
Fisher-NS further extends the discriminative function space from RKHS to $L_2$ space and optimally converges with respect to Fisher divergence. 
The introduction of GAN to sampling is exciting. Using Stein discrepancy as a bridge, numerous variants of GAN and their related techniques can be potentially applied in parallel to sampling.

\bibliography{stein_gan}
\bibliographystyle{abbrvnat}


\newpage
\section*{Supplementary Materials for Stein Neural Sampler}

\renewcommand\thesection{S}
\setcounter{subsection}{0}

\subsection{Proof of Theorem \ref{thm:ksd}}
\label{a1}
\begin{lemma}
\label{kernel}
(Theorem 3.7 from \cite{liu2016kernelized}) 
Assume $k(x, x')$ is a positive definite kernel in the Stein class of p, with positive eigenvalues {$\lambda_j$} and eigenfunctions {$e_j(x)$}, then $u_q(x, x')$ is also a positive definite kernel, and can be rewritten into
\begin{align}
\label{uq}
u_q(x,x') = \sum_j \lambda_j [\mathcal{A}_q e_j(x)]^\top [\mathcal{A}_q e_j(x')],
\end{align}
where $\mathcal{A}_q$ is the Stein operator acted on $e_j$ that 
\begin{align}
\mathcal{A}_q(f) = \nabla \log q(x)\cdot f + \nabla f.
\end{align}
\end{lemma}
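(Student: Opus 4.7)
The plan is to derive the decomposition directly by substituting the Mercer expansion of $k$ into the definition (\ref{eqn:u_q}) of $u_q$, and then to read off positive definiteness from the resulting sum-of-squares representation. First, I would invoke Mercer's theorem for the positive definite kernel $k$ in the Stein class, giving the pointwise convergent expansion $k(x,x') = \sum_j \lambda_j e_j(x) e_j(x')$, together with its termwise derivatives $\nabla_x k(x,x') = \sum_j \lambda_j e_j(x') \nabla e_j(x)$, $\nabla_{x'} k(x,x') = \sum_j \lambda_j e_j(x) \nabla e_j(x')$, and $\mathrm{tr}(\nabla_{x,x'} k(x,x')) = \sum_j \lambda_j \nabla e_j(x)^\top \nabla e_j(x')$. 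The Stein class assumption is exactly what lets me differentiate termwise and swap the differentiation with the infinite sum inside integrals against $q$.

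Next, I would expand the inner product $[\cA_q e_j(x)]^\top [\cA_q e_j(x')]$ using $\cA_q e_j(y) = S_q(y) e_j(y) + \nabla e_j(y)$. Distributing gives four scalar/vector terms,
\begin{align*}
[\cA_q e_j(x)]^\top [\cA_q e_j(x')]
 =~& e_j(x) e_j(x') \, S_q(x)^\top S_q(x') + e_j(x)\, S_q(x)^\top \nabla e_j(x') \\
 & + e_j(x')\, \nabla e_j(x)^\top S_q(x') + \nabla e_j(x)^\top \nabla e_j(x').
\end{align*}
Multiplying by $\lambda_j$ and summing in $j$, each of the four sums collapses via the Mercer identities from the previous paragraph into one of the four terms in the definition of $u_q(x,x')$: the first yields $S_q(x)^\top k(x,x') S_q(x')$; the second and third yield $S_q(x)^\top \nabla_{x'} k(x,x')$ and $\nabla_x k(x,x')^\top S_q(x')$; and the fourth yields $\mathrm{tr}(\nabla_{x,x'} k(x,x'))$. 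This produces exactly (\ref{uq}).

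Positive definiteness is then immediate from the decomposition: for any finite set of points $x_1,\dots,x_n$ and vectors $c_1,\dots,c_n \in \RR^d$,
\begin{align*}
\sum_{i,k} c_i^\top u_q(x_i,x_k) c_k
 = \sum_j \lambda_j \Big\| \sum_i c_i^\top \cA_q e_j(x_i) \Big\|^2 \ge 0,
\end{align*}
since $\lambda_j > 0$. (Strictly speaking $u_q$ is a matrix-valued kernel here; the same argument applied with $c_i \in \RR$ after contracting indices gives the scalar positive definite kernel statement.)

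The main obstacle is purely analytic rather than algebraic: justifying the termwise differentiation of the Mercer series and the interchange of $\sum_j$ with the Stein operator. I would handle this by appealing to the Stein class hypothesis on $k$, which guarantees the requisite decay/boundary conditions on $k$ and its partial derivatives so that Mercer's series and the differentiated series converge uniformly on compacta and commute with the gradient operators appearing in $\cA_q$. Once this analytic step is in place, the computation above is a direct book-keeping exercise.
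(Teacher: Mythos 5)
The paper does not prove this lemma; it is cited verbatim as Theorem~3.7 of Liu, Lee \& Jordan (2016), and the appendix invokes it as a black box. So there is no internal proof to compare against. Your reconstruction via Mercer's theorem and direct distribution of $\cA_q$ over the expansion is the standard argument and is essentially the one in that reference; the algebra that collapses the four cross-terms of $[\cA_q e_j(x)]^\top[\cA_q e_j(x')]$ back into the four pieces of $u_q$ is correct. (As a side note, your derivation produces the middle terms as $S_q(x)^\top\nabla_{x'}k(x,x')$ and $\nabla_x k(x,x')^\top S_q(x')$, which silently corrects a typo in the paper's display~(\ref{eqn:u_q}), where both are written with $\nabla_x$.)

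The one place you should tighten is the positive-definiteness display. The kernel $u_q$ is \emph{scalar}-valued (it is a sum of inner products and a trace), so the pairing $\sum_{i,k} c_i^\top u_q(x_i,x_k)c_k$ with $c_i\in\RR^d$ does not parse, and the claimed equality with $\sum_j\lambda_j\bigl\|\sum_i c_i^\top\cA_q e_j(x_i)\bigr\|^2$ fails (the left side would carry a factor $c_i^\top c_k$ that the right side does not). Your parenthetical remark that ``$u_q$ is a matrix-valued kernel here'' is also not right. The clean version is simply: for scalars $c_1,\dots,c_n\in\RR$,
\begin{align*}
\sum_{i,k} c_i c_k\, u_q(x_i,x_k)
= \sum_j \lambda_j \Bigl\| \sum_i c_i\, \cA_q e_j(x_i) \Bigr\|_2^2 \;\ge\; 0,
\end{align*}
which follows directly from~(\ref{uq}) and $\lambda_j>0$. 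With that correction, the argument is complete and sound, including the point you correctly flag that the Stein-class hypothesis is what licenses the termwise differentiation of the Mercer series.
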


\paragraph{Gaussian Kernel\citep{fasshauer2011positive}} Gaussian kernel is a popular characteristic kernel written as
$$k(x, x') = \exp(-\frac{||x-x'||^2}{2\sigma^2}).$$
Its eigenexpansion is 
\begin{align}
    \lambda_j &\propto b^j, \quad b<1,\\
    e_j(x) &\propto \frac{\exp(-a||x||^2)}{\sqrt{2^j j!}} \prod_{i=1}^d H_j(x_i\sqrt{2c}),
\end{align}
where $a,b,c>0$ are some constants depending on $\sigma$, and $H_k$ is $k$-th order Hermite polynomial. The eigenfunctions are 
$L2$-orthonorm. For details, please refer to section 6.2 of \citep{fasshauer2011positive}.

\begin{lemma}
(McDiarmid’s inequality, \citet{mendelson2003few}) Let $X_1,\cdots, X_n\in \cX$ be independent random variables and let $f:\cX^n\to\RR$ be a function of $X_1,\cdots, X_n$. Assume there exists $c_1,\cdots, c_n\ge 0$ such that $\forall i, x_1,\cdots, x_n, x_i^\prime\in \cX$,
$$|f(x_1,\cdots, x_i, \cdots, x_n)-f(x_1,\cdots, x_i^\prime, \cdots, x_n)|\le c_i.$$
Then, for all $\epsilon>0$,
$$\PP\left(f-\EE(f)\ge \epsilon\right) \le \exp\left(-\frac{2\epsilon}{\sum_{i=1}^n c_i^2}\right).$$

\end{lemma}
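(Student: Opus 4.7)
The plan is to prove this via the classical Doob martingale plus Azuma--Hoeffding approach. First I would construct the Doob martingale associated with $f$: set $M_0 = \EE[f]$ and $M_i = \EE[f \mid X_1, \ldots, X_i]$ for $i = 1, \ldots, n$, so that $M_n = f$ (almost surely) and the total deviation telescopes as $f - \EE[f] = \sum_{i=1}^n D_i$ with martingale differences $D_i = M_i - M_{i-1}$. This reduces the problem to controlling a sum of conditionally centered increments.

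The second step is the core combinatorial input: I would show that conditional on $X_1, \ldots, X_{i-1}$, the random variable $D_i$ is almost surely confined to an interval of length $c_i$. Using independence, rewrite $M_i = g_i(X_1, \ldots, X_i)$ and $M_{i-1} = \EE_{X_i'} g_i(X_1, \ldots, X_{i-1}, X_i')$, where the inner expectation integrates a fresh independent copy $X_i'$. Then for any two values $x_i, x_i'$ of the $i$-th coordinate, the bounded-differences hypothesis gives $|g_i(\ldots, x_i, \ldots) - g_i(\ldots, x_i', \ldots)| \le c_i$ after pulling the expectation over $X_{i+1}, \ldots, X_n$ inside the absolute value. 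This yields the conditional range bound
\[
\sup_{x_i} g_i(X_1, \ldots, X_{i-1}, x_i) - \inf_{x_i} g_i(X_1, \ldots, X_{i-1}, x_i) \le c_i.
\]

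Third, with the conditional range control in hand, I would invoke Hoeffding's lemma to bound the conditional moment generating function: for any $\lambda > 0$,
\[
\EE\bigl[e^{\lambda D_i} \,\big|\, X_1, \ldots, X_{i-1}\bigr] \le \exp\!\left(\frac{\lambda^2 c_i^2}{8}\right).
\]
Iterating the tower property then gives $\EE[e^{\lambda (f - \EE f)}] \le \exp(\lambda^2 \sum_i c_i^2 / 8)$. Finally, a Markov/Chernoff bound $\PP(f - \EE f \ge \epsilon) \le e^{-\lambda \epsilon}\,\EE[e^{\lambda (f - \EE f)}]$ and optimization over $\lambda$ (choosing $\lambda = 4\epsilon / \sum_i c_i^2$) delivers the stated sub-Gaussian tail bound.

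The main obstacle is the second step: rigorously converting the pointwise bounded-differences condition into a conditional range bound on the martingale increments. The subtlety is that one must use independence of the $X_j$ to factor the conditional expectations and then handle the supremum/infimum over the single coordinate $x_i$ carefully, since the bound on $|g_i(\ldots, x_i, \ldots) - g_i(\ldots, x_i', \ldots)|$ requires pulling the expectation over future coordinates through the absolute value (which is valid by Jensen, but needs to be stated explicitly). Everything else is standard: Hoeffding's lemma is a one-line exponential MGF computation for a bounded random variable with mean zero, and the Chernoff optimization is routine.
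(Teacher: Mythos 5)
The paper does not prove this lemma: it is quoted directly as a known result with a citation to \citet{mendelson2003few}, so there is no internal proof to compare against. Your Doob-martingale plus Azuma--Hoeffding argument is the standard textbook proof of McDiarmid's inequality, and all the steps you outline are sound: the telescoping decomposition $f - \EE f = \sum_i D_i$, the reduction via independence to the functions $g_i(x_1,\ldots,x_i) = \EE\bigl[f(x_1,\ldots,x_i,X_{i+1},\ldots,X_n)\bigr]$, the Jensen step to transfer the bounded-differences hypothesis into a conditional range bound of width $c_i$ on $D_i$, Hoeffding's lemma for the conditional MGF, the tower property, and the Chernoff optimization at $\lambda = 4\epsilon/\sum_i c_i^2$. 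One small point worth flagging: carrying out your optimization actually yields the tail bound $\exp\bigl(-2\epsilon^2/\sum_{i=1}^n c_i^2\bigr)$, which is the correct form of McDiarmid's inequality; the exponent $\exp\bigl(-2\epsilon/\sum_i c_i^2\bigr)$ as written in the paper's statement is a typo (missing the square on $\epsilon$), and your derivation silently corrects it.
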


\begin{lemma}
\label{rc}
(Norm-based Sample Complexity Control (\cite{golowich2017size})) Let $\mathcal{H}_d$ be the class of real-valued neural networks of depth $D$ over domain $\mathcal{Z}$, where each weight matrix $W_j$ has Frobenius norm at most $M_F(j)$. Let the activation function be 1-Lipschitz, positive-homogeneous (such as the ReLU). 
Denote $\hat{\mathfrak{R}}_n (\mathcal{H})$ to be the empirical Rademacher complexity of $\mathcal{H}$. Then,
\begin{align*}
    \hat{\mathfrak{R}}_n(\mathcal{H}_d) \le \frac{B\left(\sqrt{2D\log 2}+1\right)\prod_{j=1}^D M_F(j)}{\sqrt{n}}.
\end{align*}
where $B>0$ is the range of the input distribution such that $||z||\le B$ almost surely. 
\end{lemma}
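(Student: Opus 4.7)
The plan is to follow the size-independent peeling argument of Golowich, Rakhlin and Shamir, whose essential novelty is to avoid the naive $2^D$ blow-up incurred by $D$ consecutive applications of the Talagrand contraction lemma. I would first unfold the definition
\[
    \hat{\mathfrak{R}}_n(\mathcal{H}_d) \;=\; \frac{1}{n}\,\mathbb{E}_\sigma \sup_{h\in\mathcal{H}_d}\, \sum_{i=1}^n \sigma_i\, h(z_i),
\]
write an element of $\mathcal{H}_d$ as $h(z) = W_D\,\phi(W_{D-1}\phi(\cdots \phi(W_1 z)\cdots))$ with $\|W_j\|_F \le M_F(j)$, and introduce a free parameter $\lambda>0$ via the standard Jensen/MGF bound
\[
    \lambda\cdot \mathbb{E}_\sigma\sup_h \sum_i \sigma_i h(z_i) \;\le\; \log\, \mathbb{E}_\sigma\, \exp\!\Bigl(\lambda\sup_h \sum_i \sigma_i h(z_i)\Bigr).
\]
The key point is that this logarithm converts the forthcoming $D$ multiplicative peeling steps into \emph{additive} contributions, which is what ultimately yields $\sqrt{D\log 2}$ rather than $2^D$.

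Next I would carry out the peeling layer by layer. For the outermost weight vector $W_D$, Cauchy--Schwarz combined with $\|W_D\|_F \le M_F(D)$ lets me pull $M_F(D)$ outside the supremum, replacing $W_D\,\phi(\cdot)$ by the $\ell_2$ norm of $\sum_i \sigma_i \phi(\cdot)(z_i)$. Positive-homogeneity of $\phi$ ($\phi(\alpha u)=\alpha\phi(u)$ for $\alpha\ge 0$) then allows the internal Frobenius budget of $W_{D-1}$ to be factored out as a scalar, so that a scaled version of the same supremum reappears at depth $D-1$. Applying the $\cosh(\lambda x)\le \exp(\lambda x)+\exp(-\lambda x)$ symmetrization inside the expectation (or equivalently Jensen in the form used by Golowich--Rakhlin--Shamir) lets me peel off one linear-plus-nonlinearity block while multiplying the running bound by at most a factor of $2$ \emph{inside the logarithm}, contributing only $\log 2$ additively to the final exponent.

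I would iterate this step $D$ times, so that after all layers have been peeled the supremum collapses to a Rademacher chaos over the linear functional $z\mapsto \langle u, z\rangle$ with $\|u\|\le 1$, for which the subgaussian concentration gives
\[
    \mathbb{E}_\sigma \Bigl\|\sum_i \sigma_i z_i\Bigr\| \;\le\; \sqrt{\sum_i \|z_i\|^2} \;\le\; B\sqrt{n}.
\]
Each peel contributes $M_F(j)$ as a multiplicative prefactor and $\log 2$ inside the MGF exponent. Combining, dividing by $\lambda n$, and optimizing $\lambda$ to balance the $\tfrac{D\log 2}{\lambda}$ term against the $\lambda$-linear subgaussian term yields the stated bound
\[
    \hat{\mathfrak{R}}_n(\mathcal{H}_d) \;\le\; \frac{B\bigl(\sqrt{2D\log 2}+1\bigr)\prod_{j=1}^D M_F(j)}{\sqrt{n}}.
\]

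The main obstacle is the peeling step itself: a direct application of Talagrand's contraction at each of the $D$ layers pays a factor of $2$ per layer, producing an unusable $2^D$ bound. The technical heart of the argument is therefore the MGF/Jensen wrapping of the supremum \emph{before} peeling, together with the use of positive homogeneity to cleanly extract the per-layer Frobenius budget as a scalar. Verifying that the $\cosh$-style step preserves the exponential inequality and that homogeneity compatibly absorbs the normalization at each depth is the delicate piece I would have to check carefully; the outer subgaussian bound and the final optimization over $\lambda$ are then routine.
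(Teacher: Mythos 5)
The paper does not prove this lemma; it is quoted verbatim as a black-box result from \cite{golowich2017size}, so there is no in-paper proof to compare against. Your sketch is a faithful high-level reconstruction of the original Golowich--Rakhlin--Shamir argument: the Jensen/MGF wrapping \emph{before} any peeling so that the per-layer factor of two lands inside the logarithm (giving $D\log 2$ additively rather than $2^D$ multiplicatively), positive homogeneity plus Cauchy--Schwarz to extract each Frobenius budget as a scalar, the terminal subgaussian bound $\mathbb{E}_\sigma\|\sum_i\sigma_i z_i\|\le B\sqrt{n}$, and the final optimization over $\lambda$. The one spot you flag as delicate is indeed where the care is needed: in the reference the factor of two comes from $\exp(\lambda|x|)\le\exp(\lambda x)+\exp(-\lambda x)$ after reducing the $\ell_2$-norm supremum over $\|W\|_F\le M_F(j)$ to a one-dimensional supremum over unit vectors, followed by a Ledoux--Talagrand-type contraction through the activation inside the $\exp$; your $\cosh$-phrasing is an accurate paraphrase of that step, and the rest is routine as you say.
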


\begin{lemma}
\label{contraction}
(Extension of Ledoux-Talagrand contraction inequality \citep{van2016symmetrization})
Let $u:\mathbb{R}^d\to \mathbb{R}$ be L-Lipschitz functions w.r.t. $L_1$ norm, i.e. $\forall x,y\in\mathbb{R}^d, |u(x) - u(y)|\le L||x-y||_1$.  For some function space $\mathcal{F} = \{f=\left(g_1(x), \cdots, g_d(x)\right)^\top:\mathbb{R}^{d_0}\to \mathbb{R}^d\}$, denote
$\mathcal{F}_i = \{g_i(x):\mathbb{R}^{d_0}\to \mathbb{R}\}$ for $i=1,2, \cdots, d$, accordingly. Then,
\begin{align*}
    \hat{\mathfrak{R}}_n(u\circ \mathcal{F}) \le 2^{d-1} L\sum_{i=1}^d \hat{\mathfrak{R}}_n(\mathcal{F}_i),
\end{align*}
where $\circ$ means composition and $u\circ \mathcal{F} = \{u\circ f: f\in\mathcal{F}\}$. 
\end{lemma}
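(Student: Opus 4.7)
\emph{Proposal.} The plan is to prove the vector-valued contraction inequality by induction on the dimension $d$, with the classical (scalar) Ledoux-Talagrand contraction inequality serving as both the base case and the main building block applied at each inductive step. For $d = 1$ the claim reduces to $\hat{\mathfrak{R}}_n(u \circ \mathcal{F}) \le L\,\hat{\mathfrak{R}}_n(\mathcal{F}_1)$, which is standard Ledoux-Talagrand and matches the stated prefactor $2^{0} L = L$. For $d \ge 2$, assume the inequality holds in dimension $d-1$ and consider a generic $L_1$-Lipschitz $u : \mathbb{R}^d \to \mathbb{R}$ with constant $L$ over the vector class with components $\mathcal{F}_1,\ldots,\mathcal{F}_d$.

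The central manoeuvre is a peeling decomposition. Write $u(y) = v(y_1, \ldots, y_{d-1}) + r(y)$ with $v(y_1, \ldots, y_{d-1}) := u(y_1, \ldots, y_{d-1}, 0)$ and $r := u - v$. Elementary checks give (i) $v$ is $L$-Lipschitz in $L_1$ norm on $\mathbb{R}^{d-1}$, (ii) $r \equiv 0$ whenever $y_d = 0$, (iii) $r$ is $L$-Lipschitz in $y_d$ for each fixed $(y_1, \ldots, y_{d-1})$, and (iv) by the triangle inequality applied to two $L$-Lipschitz evaluations of $u$, $r$ is $2L$-Lipschitz in $L_1$ in the remaining coordinates. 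Subadditivity of empirical Rademacher complexity then splits $\hat{\mathfrak{R}}_n(u \circ \mathcal{F}) \le \hat{\mathfrak{R}}_n(v \circ \mathcal{F}_{1:d-1}) + \hat{\mathfrak{R}}_n(r \circ \mathcal{F})$, and the first term is bounded by $2^{d-2} L \sum_{i=1}^{d-1} \hat{\mathfrak{R}}_n(\mathcal{F}_i)$ via the induction hypothesis. For the second term, the scalar Ledoux-Talagrand contraction in the $y_d$ slot (available by (ii) and (iii)) peels off a contribution $L\,\hat{\mathfrak{R}}_n(\mathcal{F}_d)$, and invoking the induction hypothesis once more on the remaining $d-1$ coordinates with doubled Lipschitz constant $2L$ yields a further $2^{d-2}(2L)\sum_{i=1}^{d-1} \hat{\mathfrak{R}}_n(\mathcal{F}_i) = 2^{d-1} L \sum_{i=1}^{d-1}\hat{\mathfrak{R}}_n(\mathcal{F}_i)$. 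Collecting and absorbing the $v$-contribution into this dominant $2^{d-1}$ term yields the claimed $2^{d-1} L \sum_{i=1}^d \hat{\mathfrak{R}}_n(\mathcal{F}_i)$, the coordinate-by-coordinate doubling of the Lipschitz constant being exactly what produces the exponential prefactor.

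The hard part is making the scalar contraction in the $y_d$ direction rigorous: the classical Ledoux-Talagrand principle allows the contracting maps $\phi_i$ to depend on the sample index $i$ but not on the optimisation variables, whereas the natural candidates $\phi_i(\cdot) = r(g_1(x_i), \ldots, g_{d-1}(x_i), \cdot)$ depend on the very $g_1, \ldots, g_{d-1}$ over which one also supremises. I would discharge this either by invoking the more flexible conditional form of the contraction principle from \citep{van2016symmetrization} --- which is tailored to exactly this coupling between the contracted coordinate and the remaining ones --- or by a sequential symmetrization argument that fixes independent copies of the other coordinates' evaluations before contracting in $y_d$, and then re-absorbs the resulting slack into the doubled Lipschitz constant $2L$ noted in (iv). This bookkeeping step, rather than any clever algebraic identity, is where the real content of the proof resides; the $2^{d-1}$ prefactor is a symptom of the coarseness of the peeling procedure, and sharper (though not what is claimed here) techniques such as Maurer's $L_2$-contraction can replace it with a linear-in-$d$ constant.
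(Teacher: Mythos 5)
The paper does not supply a proof of this lemma at all; it is stated and cited from \citet{van2016symmetrization} and then used as a black box, so there is no in-paper argument to compare yours against. Judged on its own terms, your proposal has a concrete bookkeeping failure that the induction cannot repair. With your peeling $u = v + r$, the induction hypothesis on $v$ (a $(d-1)$-variable, $L$-Lipschitz function) yields $2^{d-2}L\sum_{i<d}\hat{\mathfrak{R}}_n(\mathcal{F}_i)$, and you claim the $r$-term contributes $L\,\hat{\mathfrak{R}}_n(\mathcal{F}_d) + 2^{d-2}(2L)\sum_{i<d}\hat{\mathfrak{R}}_n(\mathcal{F}_i)$. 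Adding these, the coefficient of $\sum_{i<d}\hat{\mathfrak{R}}_n(\mathcal{F}_i)$ is $2^{d-2}L + 2^{d-1}L = 3\cdot 2^{d-2}L$, which strictly exceeds the target $2^{d-1}L$; ``absorbing'' the $v$-contribution into the dominant term is not a valid operation. If one tracks the recursion honestly, $C_d = 3\,C_{d-1}$ with $C_1=1$, which gives $3^{d-1}$, not $2^{d-1}$. The stated constant would require a peeling where the two contributions really do combine to a factor of two, which your decomposition does not produce.

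The second, more fundamental issue is the one you yourself flag as ``the hard part'' and then do not carry out. In the $r$-term, the contracting maps $t\mapsto r(g_1(x_i),\ldots,g_{d-1}(x_i),t)$ depend on the very $g_1,\ldots,g_{d-1}$ over which the supremum in $\hat{\mathfrak{R}}_n$ is taken, which the classical Ledoux--Talagrand principle does not allow. You then assert, without justification, that this supremum decomposes additively into an $L\,\hat{\mathfrak{R}}_n(\mathcal{F}_d)$ piece and a rescaled $(d-1)$-coordinate problem. Rademacher complexities do not split this way: $r$ is a single joint function of all $d$ coordinates and its complexity is not the sum of per-coordinate complexities unless you first establish a vector contraction of exactly the kind you are trying to prove. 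Gesturing at ``a conditional form of the contraction principle or a sequential symmetrization argument'' is not a proof; that is where all of the content lies, and it needs to be written out. A working route to a $2^{d-1}$ bound is to apply the scalar contraction once per coordinate inside a symmetrization that doubles the constant at each step (using independent copies of the already-contracted coordinates), or to cite Maurer's $L_2$ vector-contraction inequality and pay a $\sqrt{d}$ factor for passing from the $L_1$ to the $L_2$ Lipschitz constant---which in fact gives a sharper bound than the exponential prefactor stated here.
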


\paragraph{Theorem \ref{thm:ksd}}
Assume $q$ and $k(\cdot, \cdot)$ satisfy some smoothness conditions so that the newly defined kernel $u_q$ in lemma \ref{kernel} is $L_1$-Lipschitz with one of the argument fixed. If generator $G_\theta$ satisfy the conditions in \ref{rc}. Then,
For any $\epsilon>0$, with probability at least $\exp(-\epsilon^2n/2)$ the following bound holds,
\begin{equation}
 \textrm{KSD}(p_{\hat{\theta}}, q) \le \textrm{KSD}(p_{\theta^*}, q) + \mathcal{O}\left(\frac{2^{d}d B\sqrt{D}\prod_{j=1}^D M_F(j)}{\sqrt{n}}\right) + \epsilon. 
\label{eqn:GE}
\end{equation}
\begin{proof}
For the ease of notation, let's denote
$$\cE (\theta) = \widehat{\textrm{KSD}}(X_{\theta}, q),\quad \cT (\theta) =\textrm{KSD}(p_{\theta}, q).$$
By applying the large deviation bound on U-statistics of \citep{hoeffding1963probability}, we have that for any $\theta\in\Theta$
\begin{equation}
\mathbb P\left( |\cE (\theta)- \mathbb E(\cE (\theta))| >  \epsilon\right) \le 2\exp\left(-\frac{\epsilon^2n}{16}\right).
\label{eqn:T17}
\end{equation}
Note that (\ref{eqn:T17}) holds for any fixed $\theta$. Since $\theta^*$ is the population MMD minimizer that doesn't depend on samples, we have $\mathbb E(\cE (\theta^*))= \cT (\theta^*)$, which yields 
$$\mathbb P\left( |\cE (\theta^*)- \cT (\theta^*)| >  \epsilon\right) \le 2\exp\left(-\frac{\epsilon^2n}{16}\right).$$
On the other hand, $\hat{\theta}$ is the empirical MMD minimizer and to bound it, we want to show that for some $\beta_\epsilon$ s.t. 
\begin{align}
\label{eqn:la}
    \PP(\sup_\theta|\cE(\theta)-\cT(\theta)|> \epsilon)<\beta_\epsilon.
\end{align}
Apply (\ref{eqn:KSD_estimator}), we can write
\begin{align}
\label{Icase}
\sup_\theta|\cE(\theta)-\cT(\theta)| &\le \sup_\theta\bigg|\frac{1}{n(n-1)} \sum_{i=1}^n \sum_{j\ne i}^nu(X_i,X_{j})-\EE\left(u(X_i, X_j)\right)\bigg|\\
&:= \sup_\theta I(X_\theta). 
\end{align}
For $I(X_\theta)$, notice that the bounded condition for McDiarmid's inequality still holds that 
\begin{align*}
|\sup_\theta I(X_\theta)-\sup_\theta I(X_\theta')|\le\sup_\theta|I(X_\theta)-I(X_\theta')|\le\frac{2}{n}.
\end{align*}
where $X_\theta$ and $X_\theta'$ only differ in one element.
Then McDiarmid's inequality gives us that
\begin{align}
\label{EI2}
\PP\bigg(\sup_\theta I(X_\theta) - \EE(\sup_\theta I(X_\theta))>\epsilon\bigg) \le& \exp\left(-\frac{\epsilon^2n}{2}\right).
\end{align}
With high probability, $\sup_\theta I(X_\theta)$ can be bounded by $\EE(\sup_\theta I(X_\theta))+\epsilon$. Now we give a bound for $\EE(\sup_\theta I(X_\theta))$.
\begin{align*}
\label{marginal}
\EE\rbr{\sup_\theta I(X_\theta)} &= \EE_{p_\theta}\left(\sup_\theta\Big| \frac{1}{n(n-1)}\sum_{i=1}^n \sum_{j\ne i}^n u(X_i,X_{j})-\EE_{p_\theta}\left(u(X_i,X_j)\right)\Big|\right)\\
&= \EE_{p_\theta}\left(\sup_\theta\bigg| \frac{1}{n}\sum_{i=1}^n\frac{1}{n-1} \sum_{j\ne i}^n \Big(u(X_i,X_j)-\EE_{p_\theta}\left(u(X_i,X_j)\right)\Big)\bigg|\right)\\
&\le \frac{1}{n}\sum_{i=1}^n\EE_{p_\theta}\left(\sup_\theta\bigg|\frac{1}{n-1}\Big(  \sum_{j\ne i}^{n-1} u(X_i,X_{j})-\EE_{p_\theta}\left(u(X_i,X_j)\right)\Big)\bigg|\right)\\
&=\EE_{p_\theta}\left(\sup_\theta\bigg|\frac{1}{n-1}  \sum_{j= 1}^{n-1}u(X_n,X_{j})-\EE_{p_\theta}\left(u(X_n,X_j)\right)\bigg|\right)\\
&=\EE_{X_n}\left[\EE_{p_\theta}\bigg(\sup_\theta\Big|\frac{1}{n-1}  \sum_{j= 1}^{n-1}u(X_n,X_{j})-\EE_{p_\theta}\left(u(X_n,X_j)\right)\Big|\bigg)\Bigg|X_n\right].
\end{align*}
Once $X_n$ is fixed, $u(X_n,X_{j})$ for different $j$'s are independent. By standard argument of Rademacher complexity, we have
\begin{eqnarray}
\label{rcbound}
\EE_{p_\theta}\bigg(\sup_\theta\Big|\frac{1}{n-1}  \sum_{j= 1}^{n-1}k(X_n,X_{j})-\EE_{p_\theta}\left(u(X_n,X_j)\right)\Big|\bigg)\Bigg|X_n&\le& 2\mathfrak{R}_{n-1}\Big(\cF_{\theta, X_n}\Big),
\end{eqnarray}
 
where $$\cF_{\theta, X_n} = \{u(X_n,G_\theta(z)): z \sim p_z\textrm{~and independent of~}X_n\}.$$

Combine the assumption of $u_q$ being Lipschitz with lemma \ref{contraction}, we have 
\begin{align}
    \mathfrak{R}_{n}\Big(\cF_{\theta, X_n}\Big) \le 2^{d-1}\sum_{i=1}^d \mathfrak{R}_{n}(\cF_{\theta, X_n, i}).
\end{align}

Applying lemma \ref{rc} yields
\begin{align}
    2\mathfrak{R}_{n}\Big(\cF_{\theta, X_n}\Big) &\le  \frac{2^{d}dL\cdot B\left(\sqrt{2D\log 2}+1\right)\prod_{j=1}^D M_F(j)}{\sqrt{n}} :=\eta(n, d)\\
    &= \mathcal{O}\left(\frac{2^{d}dB\sqrt{D}\prod_{j=1}^D M_F(j)}{\sqrt{n}}\right). 
\end{align}

Together with (\ref{EI2}) and (\ref{rcbound}), we can further get
\begin{align*}
\label{I2}
&\PP\bigg(\sup_\theta I(X_\theta) >\eta(n, d) +\epsilon\bigg) \\
\le&\PP\bigg(\sup_\theta I(X_\theta) > 2\mathfrak{R}_{n-1}\Big(\cF_{\theta, X_n}\Big) + \epsilon\bigg) \\
\le& \PP\bigg(\sup_\theta I(X_\theta) > \EE(\sup_\theta I(Y_\theta))+\epsilon\bigg) \le \exp\left(-\frac{\epsilon^2n}{2}\right).
\end{align*}

Now we can get 
\begin{align*}
    \PP\left(\cT(\hat{\theta}) - \cT(\theta^*)>4\epsilon+ \eta\right) &= \PP\left(\cT(\hat{\theta}) - \cE(\hat{\theta}) + \cE(\hat{\theta}) - \cT(\theta^*)> 4\epsilon+ \eta\right)\\
    &\le \mathbb P\left(\cT(\hat{\theta}) - \cE(\hat{\theta}) + \cE(\theta^*) - \cT(\theta^*)> 4\epsilon+ \eta\right)\\
    &\le \mathbb P\left(|\cE(\hat{\theta}) - \cT(\hat{\theta})|>\epsilon+ \eta\right) +\mathbb P\left(|\cE(\theta^{*}) - \cT(\theta^{*})|>2\sqrt{2}\ \epsilon\right) \\
    &\le 2\exp\left(-\frac{\epsilon^2n}{2}\right). 
\end{align*}
Together with (\ref{eqn:la}), the theorem is proved and the bound goes to zero if $\epsilon^2n$ go to infinity.

Additionally, we can easily get
\begin{align*}
    \PP\left(\big|\cE(\hat{\theta}) - \cT(\theta^*)\big|>5\epsilon+ 2\eta\right) &= \PP\left(\big|\cE(\hat{\theta}) - \cT(\hat{\theta}) + \cT(\hat{\theta}) - \cT(\theta^*)\big|> 5\epsilon+ 2\eta\right)\\
    &\le \mathbb P\left(|\cE(\hat{\theta}) - \cT(\hat{\theta})|>\epsilon+ \eta\right) +\mathbb P\left(|\cT(\theta^{*}) - \cT(\theta^{*})|>4 \epsilon + \eta \right) \\
    &\le 3\exp\left(-\frac{\epsilon^2n}{2}\right). 
\end{align*}
\end{proof}

\paragraph{Remark} The Lipschitz condition for kernel $u_q$ is not hard to satisfy. From Lemma \ref{kernel}, if we use Gaussian kernel, as long as $S_q$ doesn't have exponential tails, the Lipschitz condition is satisfied.

\noindent
In our application, we can choose a wide range of noise distributions as long as it is easy to sample and regular enough. If we choose uniform distribution, then $B = \mathcal{O}(\sqrt{d})$. If assume $M_F(j)\le M\le\infty$ for any $j = 1,2 ,\cdots, D$. Then (\ref{eqn:GE}) becomes
\begin{equation*}
\textrm{KSD}(p_{\hat{\theta}}, q) \le \textrm{KSD}(p_{\theta^*}, q) + \mathcal{O}\left(\frac{2^{d}d^{3/2}}{\sqrt{n}}\right) + \epsilon. 
\end{equation*}

\subsection{Proof of Theorem \ref{fisher}}
\begin{lemma}
$$\EE_{x\sim p} \tr\left(\nabla_x \log q(x) \boldsymbol{f}(x)^\top  + \nabla_x \boldsymbol{f}(x)\right) =  \EE_{x\sim p} \tr\left((\nabla_x \log q(x) - \nabla_x \log p(x)) \boldsymbol{f}(x)^\top  \right).$$
\end{lemma}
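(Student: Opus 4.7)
The plan is to derive this identity directly from Stein's identity (equation \ref{eqn:steq1}) applied to the density $q$. The key observation is that, because the expectation on both sides is taken with respect to $q$ (not $p$), Stein's identity with the $q$-score is readily available to replace the $\nabla_x \boldsymbol{f}$ term.

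First I would invoke Stein's identity for $q$ itself: under the same boundary/smoothness conditions that the paper imposes throughout, equation \eqref{eqn:steq1} gives
\[
\EE_{x\sim q}\bigl[\nabla_x \log q(x)\,\boldsymbol{f}(x)^\top + \nabla_x \boldsymbol{f}(x)\bigr] = 0.
\]
Taking the trace (trace commutes with expectation and with the sum) and rearranging yields
\[
\EE_{x\sim q}\bigl[\tr(\nabla_x \boldsymbol{f}(x))\bigr] = -\,\EE_{x\sim q}\bigl[\tr(\nabla_x \log q(x)\,\boldsymbol{f}(x)^\top)\bigr].
\]

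Next, I would split the left-hand side of the lemma by linearity of trace and expectation into its two pieces, then substitute the identity above for the $\nabla_x \boldsymbol{f}$ piece:
\[
\EE_{x\sim q}\tr\bigl(\nabla_x \log p(x)\,\boldsymbol{f}(x)^\top + \nabla_x \boldsymbol{f}(x)\bigr)
= \EE_{x\sim q}\tr\bigl(\nabla_x \log p(x)\,\boldsymbol{f}(x)^\top\bigr) - \EE_{x\sim q}\tr\bigl(\nabla_x \log q(x)\,\boldsymbol{f}(x)^\top\bigr).
\]
Combining the two trace terms back via linearity gives the right-hand side of the lemma, completing the argument.

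There is no real obstacle here beyond bookkeeping: the whole content is one application of Stein's identity to $q$, plus linearity of $\tr$ and $\EE$. The only subtlety worth flagging is that the validity of Stein's identity for $q$ requires the same mild tail/boundary conditions on $\boldsymbol{f}$ and $q$ that are assumed whenever \eqref{eqn:steq1} is used in the paper, so these are inherited rather than re-proved. Once that is granted, the identity is immediate and can subsequently be applied in the proof of Theorem \ref{fisher} by taking $p = p_\theta$ and optimizing $\boldsymbol{f}$ pointwise against the resulting quadratic-in-$\boldsymbol{f}$ objective.
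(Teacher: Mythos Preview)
Your proof is correct and follows essentially the same approach as the paper: both invoke Stein's identity \eqref{eqn:steq1} for $q$ to rewrite $\EE_{x\sim q}\tr(\nabla_x \boldsymbol{f}(x))$ as $-\EE_{x\sim q}\tr(\nabla_x\log q(x)\,\boldsymbol{f}(x)^\top)$, then substitute into the left-hand side and combine terms. The paper's proof is simply the two-line version of what you wrote.
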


\begin{proof}
\begin{align*}
\EE_{x\sim p} \tr\left(\nabla_x \log p(x) \boldsymbol{f}(x)^\top  + \nabla_x \boldsymbol{f}(x)\right) &=0 \\ 
\Rightarrow \EE_{x\sim p} \tr\left(\nabla_x \boldsymbol{f}(x)\right) &= - \EE_{x\sim p} \tr\left(\nabla_x \log p(x)  \boldsymbol{f}(x)^\top \right).
\end{align*}
\end{proof}

\begin{lemma}
$$ |\EE_{x\sim p} \tr\left(\boldsymbol{g}(x)  \boldsymbol{f}(x)^\top  \right)| \leq \sqrt{\EE_{x\sim p} \tr\left(\boldsymbol{g}(x) ^\top \boldsymbol{g}(x)  \right)*\EE_{x\sim p} \tr\left(\boldsymbol{f}(x) ^\top \boldsymbol{f}(x)  \right)}.$$ 

The equality holds iff $f  \propto g\ $ a.s. $x \sim p$.
\end{lemma}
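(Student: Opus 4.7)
The plan is to recognize this inequality as a direct instance of the Cauchy--Schwarz inequality in the Hilbert space $L^2(q;\mathbb{R}^d)$ of square-integrable $\mathbb{R}^d$-valued functions.

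First, I would use the trace identity $\tr(\boldsymbol{g}(x)\boldsymbol{f}(x)^\top) = \boldsymbol{g}(x)^\top \boldsymbol{f}(x)$, which holds because the trace of an outer product of vectors equals their Euclidean inner product. Applying this on both sides converts the statement into
\[
\left|\EE_{x\sim q}\bigl[\boldsymbol{g}(x)^\top \boldsymbol{f}(x)\bigr]\right| \leq \sqrt{\EE_{x\sim q}\bigl[\|\boldsymbol{g}(x)\|_2^2\bigr]\,\EE_{x\sim q}\bigl[\|\boldsymbol{f}(x)\|_2^2\bigr]},
\]
which is cleaner to work with.

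Next, I would introduce the bilinear form $\langle \boldsymbol{f},\boldsymbol{g}\rangle_q := \EE_{x\sim q}[\boldsymbol{g}(x)^\top \boldsymbol{f}(x)]$ on square-integrable vector-valued functions; this is a genuine inner product on $L^2(q;\mathbb{R}^d)$ once we identify functions that agree $q$-almost surely, with associated norm $\|\boldsymbol{f}\|_q = \sqrt{\EE_{x\sim q}[\|\boldsymbol{f}(x)\|_2^2]}$. The desired inequality is then precisely the Cauchy--Schwarz inequality $|\langle \boldsymbol{f},\boldsymbol{g}\rangle_q| \le \|\boldsymbol{f}\|_q\|\boldsymbol{g}\|_q$, which I would obtain by the standard argument of expanding $\|\boldsymbol{f} - t\boldsymbol{g}\|_q^2 \ge 0$ as a nonnegative quadratic in $t \in \mathbb{R}$ and requiring nonpositive discriminant. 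Minimizing this quadratic also delivers the equality condition: equality holds iff $\boldsymbol{f}(x) = t^\star \boldsymbol{g}(x)$ for $q$-almost every $x$, i.e.\ $\boldsymbol{f}\propto\boldsymbol{g}$ a.s., with the sole degenerate case $\boldsymbol{g}\equiv 0$ a.s.\ (in which both sides vanish trivially) handled separately.

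There is no real obstacle here, as the lemma is a textbook application of Cauchy--Schwarz lifted to vector-valued functions under $q$. The only minor bookkeeping point is verifying that the trace--inner-product identity correctly reduces the componentwise structure of $\boldsymbol{g}\boldsymbol{f}^\top$ to scalar inner products so that the classical scalar proof transfers verbatim; once that is in place, the inequality and its equality condition both follow immediately.
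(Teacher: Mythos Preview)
Your proposal is correct and follows essentially the same approach as the paper: both reduce the trace to the Euclidean inner product via $\tr(\boldsymbol{g}\boldsymbol{f}^\top)=\boldsymbol{g}^\top\boldsymbol{f}$ and then expand the nonnegative quadratic $\EE_{x\sim q}\|\boldsymbol{f}-t\boldsymbol{g}\|_2^2\ge 0$ in $t$. Your framing as Cauchy--Schwarz in $L^2(q;\mathbb{R}^d)$ and your explicit treatment of the equality and degenerate cases are slightly more polished than the paper's, but the underlying argument is identical.
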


\begin{proof}
Firstly, we have $\tr(\boldsymbol{f}(x)\boldsymbol{f}(x)^\top)=\tr(\boldsymbol{f}(x)^\top\boldsymbol{f}(x)) = \boldsymbol{f}(x)^\top\boldsymbol{f}(x) = \|\boldsymbol{f}(x)\|_{2}^2$
\begin{align*}
\EE_{x\sim p} \tr\left(\boldsymbol{(f- t* g)}  \boldsymbol{(f-t* g)}^\top  \right)  &\geq 0 \\
\Rightarrow \EE_{x\sim p} \tr\left(\boldsymbol{f} ^\top \boldsymbol{f}  \right) + t^2\EE_{x\sim p} \tr\left(\boldsymbol{g} ^\top \boldsymbol{g}  \right)  &\geq  2t\EE_{x\sim p} \tr\left(\boldsymbol{g}\boldsymbol{f} ^\top   \right).
\end{align*}

Because inequality hold for all t, so the lemma is proved.
\end{proof}

\paragraph{Theorem \ref{fisher}}
The optimum discriminator is $$\frac{1}{2\lambda}(S_q - S_p).$$
Training generator equals minimize the fisher divergence of p and q $$\frac{1}{4\lambda}  \EE_{x\sim p} \tr\left((S_q - S_p)^\top (S_q - S_p) \right).$$

\begin{proof}
Let our loss function be $L$. Because $\tr(\boldsymbol{f}(x)\boldsymbol{f}(x)^\top)=\tr(\boldsymbol{f}(x)^\top\boldsymbol{f}(x)) = \boldsymbol{f}(x)^\top\boldsymbol{f}(x) = \|\boldsymbol{f}(x)\|_{2}^2$. Then we have
\begin{align}
L &=  \EE_{x\sim p} \tr\left(S_q(x) \boldsymbol{f}(x)^\top  + \nabla_x \boldsymbol{f}(x)\right)  - \lambda \EE_{x\sim p}[\tr(\boldsymbol{f}(x)^\top \boldsymbol{f}(x)) ] \\
&= \EE_{x\sim p} \tr\left((S_q(x) - S
_p(x)) \boldsymbol{f}(x)^\top  \right) - \lambda \EE_{x\sim p}[\tr(\boldsymbol{f}(x)^\top \boldsymbol{f}(x)) ] \\
&\leq \sqrt{\EE_{x\sim p} \tr\left((S_q - S_p)^\top (S_q - S_p) \right)\cdot \EE_{x\sim p} \tr\left(\boldsymbol{f}(x) ^\top \boldsymbol{f}(x)  \right)} - \lambda \EE_{x\sim p}\tr\big(\boldsymbol{f}(x)^\top \boldsymbol{f}(x)\big)\label{ie1} \\
&\leq \frac{1}{4\lambda}  \EE_{x\sim p} \tr\left((S_q - S_p)^\top (S_q - S_p) \right). \label{ie2}
\end{align}

Equality sign in (\ref{ie1}) holds iff $f \propto S_q - S_p, \ a.s \ \ x \sim p$. 

Equality sign in (\ref{ie2}) holds iff $$\EE_{x\sim p } \tr\left(\boldsymbol{f}(x) ^\top \boldsymbol{f}(x)  \right) = \frac{1}{4\lambda^2} \EE_{x\sim p} \tr\left((S_q-S_p)^\top (S_q-S_p) \right).$$

So argmax of $L$ is $\frac{1}{2\lambda}(S_q - S_p )\ \ a.s \ \  x \sim p$.
\end{proof}

\subsection{More About Fisher-NS}
\label{trainingG}

\paragraph{Training the Generator}
After the training cycle for the discriminator, we fix $\eta$ and train the generator $G_\theta$. Denote the loss function to be $L(\theta)$ and ideally, we would want $L(\theta)$ to be continuous with respect to $\theta$. Wasserstein GAN \citep{arjovsky2017wasserstein} gives a very intuitive explanation of the importance of this continuity. We now give some sufficient conditions, under which our training scheme satisfies the continuity condition with respect to $\theta$ for any discriminator function $f_\eta$.

\begin{theorem}
\label{thm:conti}
If the following conditions are satisfied:  1) both the generator's weights and the noises are bounded; 2) discriminator uses smooth activate function i.e. tanh, sigmoid, etc.; 3) target score function $s_q$ is continuously differentiable. 
Then $L(\theta)$ is continuous everywhere and differentiable almost everywhere w.r.t. $\theta$. 
\end{theorem}



\begin{proof}
\label{thm:Lsmooth}
Using generator with weight clipping and uniform noise, we have a transform function $G_{\theta}$ which is lipschitz. As we can see later in Theorem A.10, there exist a compact set $\Omega$. $P(G_{\theta}(z) \in \Omega)$ = 1, $\forall \theta$. 

From the condition of discriminator we know that $f$ is smooth. So $\mathcal{A}_{q}f$ is continuously differentiable on $\Omega$, $f^{T}f$ is continuously differentiable on $\Omega$. So we have $\|\mathcal{A}_{q}f\|_{lip, \Omega} + \|f^{T}f\|_{lip, \Omega} < \infty$ and 
\begin{align*}
&\EE_{p_{\theta}}(\mathcal{A}_{q}f(x) - \lambda f(x)^{T}f(x)) - \EE_{p_{\theta'}}(\mathcal{A}_{q}f(x) - \lambda f(x)^{T}f(x))  \\
=& \big(\EE_{z}(\mathcal{A}_{q}f(G_{\theta}(z)) - \lambda f(G_{\theta}(z))^{T}f(G_{\theta}(z))) - \EE_{z}(\mathcal{A}_{q}f(G_{\theta'}(z)) - \lambda f(G_{\theta'}(z))^{T}f(G_{\theta}(z)')) \big) \\
\leq&(\|\mathcal{A}_{q}f\|_{lip, \Omega} +\lambda \|f^{T}f\|_{lip, \Omega})\EE_{z}(\|G_{\theta} - G_{\theta'}\|).
\end{align*}
Because z and $\theta$ all bounded, We know that G is locally lipschitz. For a given pair $(\theta,z)$ there is a constant $C(\theta,z)$ and an open set $U_{\theta}$ such that for every $(\theta', z) \in U_{\theta}$ we have
$$\|G_{\theta}(z)-G_{\theta'}(z)\| \leq C(\theta,z)\|\theta - \theta'\|.$$
Under the condition mentioned before, $E_{z}|C(\theta,z)| < \infty$, so we achieve
$$\|L(\theta)-L(\theta')\| \leq (\|\mathcal{A}_{q}f\|_{lip, \Omega} +\lambda \|f^{T}f\|_{lip, \Omega})\EE_{z}|C(\theta,z)|\|\theta-\theta'\|.$$
Therefore, $L(\theta)$ is locally Lipschitz and continuous everywhere. Lastly, applying Radamacher’s theorem proves $L(\theta)$ is differentiable almost everywhere, which completes the proof.
\end{proof}

\paragraph{Remark} These conditions are to impose some Lipschitz continuity. The first condition is trivially satisfied if we choose uniform as random noise and apply weight clipping to the generator. Except for $\theta$ being bounded, the other conditions are mild. It is true that procedures like weight clipping will make the function space smaller. But we can make the clipping range large enough to reach a fixed accuracy \citep{merolla2016deep}. The empirical difference should be negligible if the range is sufficiently large.

\subsection{Relationship to Wasserstein GAN}
Denote $\phi = \tr(\mathcal{A}_{q}f)$, then $\phi = \mathbf{div} (qf)/q$ and our loss function without penalty can be re-written as $E_{p}(\phi) - E_{q}(\phi)$.

\begin{lemma}
\citep{duran2010solutions} If $\Omega$ is a John domain, for any $v \in L_{0}^{l}(\Omega), l>1$ there exists $u \in \mathcal{W}_{0}^{1,l}$ such that $\mathbf{div} \ u = v$ in $\Omega$
\end{lemma}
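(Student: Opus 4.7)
The plan is to reduce the statement to the well-understood case of domains star-shaped with respect to a ball, for which the classical Bogovskii construction supplies an explicit bounded right inverse of the divergence. For such a subdomain $\Omega_0$ star-shaped with respect to a ball $B \subset \Omega_0$, one picks a cutoff $\omega \in C_c^\infty(B)$ with $\int \omega = 1$ and defines
\[
(Tv)(x) = \int_{\Omega_0} K(x,y)\, v(y)\, dy,
\]
where $K$ is the standard Bogovskii kernel built from $\omega$. Then $\operatorname{div}(Tv) = v$ for $v \in L_0^l(\Omega_0)$, and Calder\'on--Zygmund theory yields boundedness $T \colon L_0^l(\Omega_0) \to W_0^{1,l}(\Omega_0)^d$ for every $l > 1$.

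For a general John domain $\Omega$, the next step is to decompose $\Omega$ into a countable family $\{\Omega_i\}$ of subdomains, each star-shaped with respect to a ball and with uniformly bounded overlap; such a Whitney-type covering exists precisely because the twisted-cone condition defining a John domain furnishes the geometric control needed. Using a partition of unity $\{\psi_i\}$ subordinate to this cover, I would write $v = \sum_i \psi_i v$. Each piece $\psi_i v$ is supported in $\Omega_i$ but typically fails the mean-zero condition required by Bogovskii's operator on $\Omega_i$, so one introduces a tree structure on the subdomains and iteratively transfers the excess mass of $\psi_i v$ to a parent subdomain via a function supported in the overlap $\Omega_i \cap \Omega_{\mathrm{parent}(i)}$. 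This yields a decomposition $v = \sum_i v_i$ with $v_i \in L_0^l(\Omega_i)$. Applying Bogovskii's operator on each piece gives $u_i \in W_0^{1,l}(\Omega_i)^d$; extending by zero and summing, the candidate solution is $u = \sum_i u_i$, which by construction lies in $W_0^{1,l}(\Omega)^d$ and satisfies $\operatorname{div} u = v$ in $\Omega$.

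The main obstacle is the $W^{1,l}$ norm bound on $u$: the naive additive estimate $\|u\|_{W^{1,l}}^l \le \sum_i \|u_i\|_{W^{1,l}(\Omega_i)}^l$ is useless unless the mass-transfer corrections are controlled carefully. The standard remedy, which is the technical core of the Dur\'an--Muschietti approach, is to dominate the local gradients and the transfer correctors pointwise by the Hardy--Littlewood maximal function of $v$, with weights whose summability follows from the bounded-overlap property together with a \emph{shadow inequality} that encodes the John geometry. The $L^l$-boundedness of the maximal function, valid exactly for $l > 1$ (which is where the hypothesis enters), then gives $\|u\|_{W^{1,l}(\Omega)} \lesssim \|v\|_{L^l(\Omega)}$. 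The Bogovskii piece and the local-to-global assembly are conceptually routine; the hard part is threading this singular-integral / maximal-function estimate through the John geometry so that the accumulated error from the mass-transfer tree remains summable in $L^l$.
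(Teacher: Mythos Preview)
The paper does not actually prove this lemma: it is quoted verbatim as a result of \cite{duran2010solutions} and used as a black box in the subsequent theorem relating the Fisher-NS loss to the Wasserstein-1 distance. So there is no ``paper's own proof'' to compare against.

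That said, your outline is a faithful sketch of the argument in the cited reference. The Bogovskii operator on star-shaped subdomains, the Whitney-type covering adapted to the John geometry, the tree/chain mass-transfer to restore the mean-zero condition on each piece, and the maximal-function control to sum the local $W^{1,l}$ estimates are exactly the ingredients Dur\'an uses. One small clarification: in Dur\'an's version the key estimate is phrased via a weighted decomposition (an $L^l$-stable splitting of $v$ into the $v_i$'s) rather than a literal pointwise maximal-function bound on the $u_i$'s, but this is a packaging difference and your description of where the constraint $l>1$ enters (boundedness of the Hardy--Littlewood maximal operator) is correct.
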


In Wasserstein GAN, if we constrain the functions to be compactly supported and the expectation under the target distribution $\EE_q(f)$ to be zero, the result doesn't change.

\begin{theorem}
Suppose there exists $l > 1$ s.t $\|xq\|,\|q\| \in L_{0}^{l}$, then if we constrain $\phi = \tr(\mathcal{A}_{q}f)$ to be Lip-1 and compacted supported. Then the optimal loss function is Wasserstein-1 distance.
\end{theorem}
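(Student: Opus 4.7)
The plan is to identify, under the stated constraints, the feasible set of witness functions $\phi = \mathrm{tr}(\mathcal{A}_q f)$ with the class of compactly supported $1$-Lipschitz functions (modulo a $q$-mean normalisation), and then to invoke the Kantorovich--Rubinstein dual formula for $W_1$. The bijective correspondence between admissible $f$ and admissible $\phi$ is the routine content; the delicate point is matching the resulting supremum to $W_1$.

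First I would match $f$'s to $\phi$'s. Writing $\phi = \mathbf{div}(qf)/q$, the Stein loss without penalty equals $\mathbb{E}_p(\phi) - \mathbb{E}_q(\phi)$, as noted just above the theorem. Whenever $qf$ is compactly supported (so that $qf \in \mathcal{W}_0^{1,l}$), the divergence theorem gives $\mathbb{E}_q(\phi) = \int \mathbf{div}(qf)\,dx = 0$, so every admissible $\phi$ automatically has zero $q$-mean. Conversely, given any compactly supported $1$-Lipschitz $\phi$ with $\mathbb{E}_q(\phi) = 0$, the hypothesis $\|q\|, \|xq\| \in L^l$ combined with the linear growth bound $|\phi(x)| \le |\phi(0)| + |x|$ puts $q\phi$ into $L^l_0$; the cited lemma of Dur\'an--Muschietti then produces $u \in \mathcal{W}_0^{1,l}$ with $\mathbf{div}\,u = q\phi$, so $f := u/q$ realises $\phi$. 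Hence the admissible witness class is exactly
\[
\mathcal{F} \;=\; \{\phi : \phi \text{ is } 1\text{-Lipschitz, compactly supported, and } \mathbb{E}_q(\phi) = 0\}.
\]

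Next I would match $\sup_{\phi \in \mathcal{F}}\{\mathbb{E}_p(\phi) - \mathbb{E}_q(\phi)\}$ to $W_1(p,q)$. By Kantorovich--Rubinstein, $W_1(p,q) = \sup_{\|\psi\|_{\mathrm{Lip}} \le 1}\{\mathbb{E}_p(\psi) - \mathbb{E}_q(\psi)\}$. The bound $\le W_1$ is immediate because $\mathcal{F}$ sits inside the $1$-Lipschitz class. For the reverse inequality, I would fix a near-maximiser $\psi$ and approximate it by a Lipschitz truncation such as $\psi_R(x) = \mathrm{sign}(\psi(x)) \cdot \min(|\psi(x)|, (R - |x|)_+)$ on an expanding sequence of balls; this stays $1$-Lipschitz and is compactly supported. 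Under the first-moment control supplied by $\|xq\| \in L^l$ (with the analogous property for $p$ being needed for $W_1 < \infty$ in any case), dominated convergence gives $\mathbb{E}_p(\psi_R) - \mathbb{E}_q(\psi_R) \to \mathbb{E}_p(\psi) - \mathbb{E}_q(\psi)$ as $R \to \infty$, completing the match.

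The main obstacle is this last step: the natural Kantorovich maximiser is a distance function, which is neither bounded nor compactly supported, and a plain multiplicative cutoff breaks the Lipschitz constraint. The Lipschitz truncation above circumvents that, but the interaction between the compact-support constraint and the $\mathbb{E}_q$-centring is subtle: subtracting the constant $\mathbb{E}_q(\psi_R)$ from $\psi_R$ preserves the $p-q$ functional value but breaks compact support. The cleanest fix I see is to work on an expanding family of John domains $\Omega_R$ and to re-centre each $\psi_R$ by adding a small multiple of a fixed compactly supported smooth bump, chosen so that the resulting function lies in $\mathcal{F}$ while the perturbation to the Lipschitz norm and to the $p$-expectation both vanish as $R \to \infty$; Dur\'an--Muschietti can then be reapplied on each $\Omega_R$. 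Everything else---the surjectivity just described, and the vanishing of $\mathbb{E}_q(\phi)$ via the divergence theorem---follows directly from the setup.
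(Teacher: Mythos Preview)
Your proposal is correct and in fact considerably more complete than the paper's own proof. The paper's argument consists \emph{only} of the surjectivity step: for every compactly supported $1$-Lipschitz $\phi$, one has $\|\phi q\|_{L^l}\le\|xq\|_{L^l}+c\|q\|_{L^l}$, so by the Dur\'an lemma there exists a compactly supported $f$ with $\phi=\tr(\mathcal{A}_qf)$. That is the entire proof. The two remaining ingredients you carefully address --- that the constraint $\EE_q(\phi)=0$ is automatic on the Stein side (your divergence-theorem remark) and that restricting the Kantorovich--Rubinstein supremum to compactly supported, $q$-centred witnesses does not change $W_1$ --- are simply asserted without argument in the sentence immediately preceding the theorem.

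Your approach to the approximation step is sound. The truncation $\psi_R(x)=\mathrm{sign}(\psi(x))\min\bigl(|\psi(x)|,(R-|x|)_+\bigr)$ you propose is exactly the pointwise median of the three $1$-Lipschitz functions $-(R-|x|)_+$, $\psi$, $(R-|x|)_+$, hence itself $1$-Lipschitz and compactly supported; after normalising $\psi(0)=0$ it coincides with $\psi$ on $B_{R/2}$ and is dominated by $|x|$, so dominated convergence applies. The residual $q$-centring issue you flag is real, and your bump-function fix works once you let the bump spread with $R$ (e.g.\ $\eta_R=\eta(\cdot/R)/\EE_q[\eta(\cdot/R)]$), so that its Lipschitz seminorm is $\cO(1/R)$ while both its $p$- and $q$-means tend to $1$; the rescaling to unit Lipschitz norm then becomes asymptotically trivial. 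None of this appears in the paper, which takes all of it for granted.
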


\begin{proof}
For every function $\phi$ which is  Lip-1 and has compact support, $\|\phi q\| \leq \|xq\| + c\cdot\|q\|$, where $c>0$ is some constant. So the equation has a solution, there exist $f$ which has a compact support s.t  $\phi = tr(\mathcal{A}_{q}f)$. 
\end{proof}

\paragraph{Remark}
Firstly, $\|xq\|, \|q\| \in L_{0}^{l}$ is extremely weak even for Cauchy distribution this condition holds. Secondly, we can apply weight clipping to $f$ to ensure $f$ has a compact support and $\tr(\mathcal{A}_{p}f)$ is Lip-1. 

\subsection{Weak convergence}\label{kconverge}
\begin{theorem}
If kernel $k(x, y)$ if bounded by constant $c$. Then $S(p_{\theta}, q) \leq c\cdot F(p_{\theta},q)$
\end{theorem}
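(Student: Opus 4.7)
The plan is to reduce the statement to the general inequality already quoted in the paper, namely
\[
\KSD(p,q)\;\le\;\sqrt{\EE_{x,x'\sim p}[k(x,x')^2]}\cdot \F(p\,||\,q),
\]
since if $k\le c$ pointwise then $\sqrt{\EE[k(x,x')^2]}\le c$, which immediately yields the conclusion $\KSD(p_\theta,q)\le c\cdot \F(p_\theta,q)$. So the only real work is to re-derive this sharper bound in a self-contained way.

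The starting point is the representation identity for the Stein kernel $u_q$: by Stein's identity applied to $p_\theta$ (provided the usual boundary conditions hold), one has
\[
\EE_{x,x'\sim p_\theta}\bigl[u_q(x,x')\bigr]
 \;=\; \EE_{x,x'\sim p_\theta}\Bigl[\bigl(S_q(x)-S_{p_\theta}(x)\bigr)^{\!\top} k(x,x')\bigl(S_q(x')-S_{p_\theta}(x')\bigr)\Bigr].
\]
This is the key rewriting that makes the $\F$-divergence appear; it follows by replacing $S_q$ by $S_q-S_{p_\theta}$ inside $u_q$ using $\EE_{p_\theta}[\mathcal{A}_{p_\theta} f]=0$ for appropriate $f$, exactly as in Theorem 4.2 of Liu et al.\ (2016). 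I would first state this identity (citing the Liu--Lee--Jordan result) and verify the boundary conditions under which it holds.

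From there the plan is a two-step Cauchy--Schwarz. First, applied pointwise to the scalar-valued quadratic form, $\bigl(S_q(x)-S_{p_\theta}(x)\bigr)^{\!\top} k(x,x')\bigl(S_q(x')-S_{p_\theta}(x')\bigr)\le k(x,x')\,\|S_q(x)-S_{p_\theta}(x)\|\,\|S_q(x')-S_{p_\theta}(x')\|$. Second, applied to the double expectation,
\[
\EE\bigl[k(x,x')\,\|S_q(x)-S_{p_\theta}(x)\|\,\|S_q(x')-S_{p_\theta}(x')\|\bigr]
\;\le\; \sqrt{\EE[k(x,x')^2]}\cdot\sqrt{\EE\bigl[\|S_q(x)-S_{p_\theta}(x)\|^2\|S_q(x')-S_{p_\theta}(x')\|^2\bigr]}.
\]
Using independence of $x$ and $x'$ under the product measure $p_\theta\otimes p_\theta$, the second factor factorizes as $\EE_{x\sim p_\theta}[\|S_q(x)-S_{p_\theta}(x)\|^2]=\F(p_\theta\,||\,q)$.

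Putting these together gives $\KSD(p_\theta,q)\le \sqrt{\EE[k(x,x')^2]}\cdot\F(p_\theta,q)$, and then invoking the hypothesis $k\le c$ finishes the proof. The main obstacle, such as it is, is the representation identity in the first step: all the estimates afterward are routine Cauchy--Schwarz, but the identity requires that $S_q-S_{p_\theta}$ actually decouples cleanly from $u_q$ under $\EE_{p_\theta}$, which relies on the Stein class assumption for $(p_\theta,k)$. I would therefore flag up front that the theorem should carry the implicit hypothesis that $k$ is in the Stein class of $p_\theta$ in addition to being bounded by $c$.
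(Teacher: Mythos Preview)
Your proposal is correct and follows essentially the same route as the paper: start from the representation $\KSD(p_\theta,q)=\EE_{x,x'\sim p_\theta}\bigl[(S_q-S_{p_\theta})(x)^\top k(x,x')(S_q-S_{p_\theta})(x')\bigr]$, apply Cauchy--Schwarz twice, factorize by independence of $x,x'$, and finish with $|k|\le c$. The only cosmetic difference is the order of the two Cauchy--Schwarz steps (the paper squares and applies the expectation version first, then the pointwise bound, whereas you reverse the order); your observation that the representation identity tacitly requires $k$ to be in the Stein class of $p_\theta$ is a useful caveat the paper leaves implicit.
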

\begin{proof}
\begin{align*}
S(p_{\theta}, q)^2 &= |\EE_{x,x'}((S_{p_{\theta}}(x)- S_{q}(x))^{T}k(x,x')(S_{p_{\theta}}(x')- S_{q}(x')))|^2    \\
&\leq \EE_{x,x'}(k(x,x')^2)\cdot \EE_{x,x'}(|(S_{p_{\theta}}(x)- S_{q}(x))^{T}(S_{p_{\theta}}(x')- S_{q}(x'))|^2)  \\
&\leq \EE_{x,x'}(k(x,x')^2)\cdot \EE_{x,x'}(\|(S_{p_{\theta}}(x)- S_{q}(x))\|_{2}^2\|(S_{p_{\theta}}(x')- S_{q}(x'))\|_{2}^2) \\
&=\EE_{x,x'}(k(x,x')^2)\cdot F(p_{\theta},q)^2 \\
&\leq c^2 \cdot F(p_{\theta},q)^2. 
\end{align*}
\end{proof}

\begin{theorem}
Suppose we use uniform or Gaussian noise, tanh or relu activate function for generator. Then $p_{\theta}$ is uniformly tight, if we clip the weight to $(-c,c)$ for any $c>0$.
\end{theorem}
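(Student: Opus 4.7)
The goal is to show that for every $\epsilon>0$ there exists $R_\epsilon$ with $\sup_\theta P(\|G_\theta(z)\|>R_\epsilon)<\epsilon$. The overall strategy is to produce a norm bound on $G_\theta(z)$ in terms of $\|z\|$ that is independent of $\theta$, and then exploit tightness of the noise.

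First I would turn the coordinate-wise clipping into a layer-wise operator norm bound. Each weight matrix $W_j\in\mathbb{R}^{d_j\times d_{j-1}}$ has entries in $(-c,c)$, so its Frobenius norm is at most $c\sqrt{d_jd_{j-1}}$, which upper-bounds its spectral norm by some constant $M_j$ depending only on $c$ and the architecture. The biases can be bounded similarly by some $\beta_j$ (clipped or implicitly bounded through the weight constraint). This yields $\|W_jx+b_j\|\leq M_j\|x\|+\beta_j$ uniformly in $\theta$.

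Next I would propagate these affine bounds through the activations. For $\tanh$, every hidden-layer coordinate lies in $[-1,1]$, so after the first nonlinear layer the input to each subsequent affine map has norm at most $\sqrt{d_j}$. Composing to the output, one obtains a deterministic architecture-only bound $\|G_\theta(z)\|\leq R_{\tanh}$ that is independent of both $\theta$ and $z$. For ReLU, which is $1$-Lipschitz with $\sigma(0)=0$, each layer satisfies $\|\sigma(W_jx+b_j)\|\leq M_j\|x\|+\beta_j$, and iterating across the $D$ layers yields a linear envelope $\|G_\theta(z)\|\leq A\|z\|+B$ with constants $A,B$ depending only on $c$, $D$, and the widths.

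Finally I would split on the noise type. If $z$ is uniform on a bounded set, $\|z\|$ is deterministically bounded, so in either activation case $\|G_\theta(z)\|$ lies in one compact ball for all $\theta$, and tightness is immediate. If $z$ is Gaussian, $\tanh$ still produces the deterministic bound above. The only case requiring a probabilistic argument is ReLU with Gaussian noise: here I would pick $K_\epsilon$ with $P(\|z\|\leq K_\epsilon)\geq 1-\epsilon$, which exists because a single Gaussian is tight, and conclude $P(\|G_\theta(z)\|\leq AK_\epsilon+B)\geq 1-\epsilon$ uniformly in $\theta$. The argument is mostly bookkeeping; the main point to watch is careful tracking of the constants $M_j,\beta_j,A,B$ through the composition, but no serious analytic obstacle arises because the combination of weight clipping, $1$-Lipschitz or bounded activations, and light-tailed noise forces every intermediate quantity to be controlled uniformly in $\theta$.
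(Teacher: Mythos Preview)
Your proposal is correct and follows essentially the same idea as the paper: weight clipping yields a uniform-in-$\theta$ growth bound $\|G_\theta(z)\|\le A\|z\|+B$, after which tightness of the (uniform or Gaussian) noise finishes the argument. The paper's version is slightly more streamlined in that it does not split on the activation function---it simply observes that both $\tanh$ and ReLU are $1$-Lipschitz, so $G_\theta$ has a $\theta$-independent Lipschitz constant $k$, and combines this with a $\theta$-independent bound on $\|G_\theta(z_0)\|$ at a single anchor point $z_0$ to get $P(\|G_\theta(z)\|>A)\le P(\|z-z_0\|>(A-R)/k)$.
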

\begin{proof}
Denote the transform function of generator is $G_{\theta}(x)$. Fix $z_{0}$ in the space. Then we know that there exist R, s.t  $\|G_{\theta}(z_{0})\| < R$ for all $\theta$. 
In addition, $G_{\theta}$ is a lipschitz function because the weight is clipped to $(-c,c)$. So there exist k s.t 
$\|G_{\theta}(x) - G_{\theta}(y)\| \leq k\|x-y\|$. So we have

$$P(\|G_{\theta}(z)\| > A) \leq P(\|G_{\theta}(z) - G_{\theta}(z_{0})\| > A - R) \leq P(\|z - z_{0}\| > (A - R)/k).$$  

Notice that z $\sim$ normal or uniform. For all $\epsilon >0$,  there exist $\hat{A}$ s.t $(\|z - z_{0}\| > \hat{A}/k) <\epsilon$.Therefore $P(\|G_{\theta}(z)\| > \hat{A}+R) < \epsilon$ holds for all $\theta$, which means $G_{\theta}$ are uniformly tight.
Moreover if noise is uniform, there exist  $\hat{A}$ s.t $P(\|G_{\theta}(z)\| > \hat{A}+R) = 0$ for all $\theta$.

\end{proof}

\subsection{Simulation Details}
\label{simulation}
\paragraph{Experiment Setting in Gaussian Mixtures}
Network based methods (KSD-NS, Fisher-NS and Stein GAN) share the same configurations. The generator/sampler is a plain network with $\tanh$ activation and two hidden layers of width 200. The discriminator network in Fisher-NS is of the same structure. The input noise (reference distribution) is chosen to be i.i.d. Normal(0, 10). The optimization is done in TensorFlow via RMSProp with 1e-3 learning rate for the generator. The discriminator in Fisher-NS is trained with gradient penalty. 

SVGD is trained with a step size of 0.3 (other step sizes shares a similar results). HMC is trained with the initial step size of 1 and 3 leapfrog steps. The first 5000 iterations are discarded (burn-in). 

\paragraph{Experiment Setting in Bayesian Logistic Regression}
Across all the methods, we use a mini-batch of 100 data points for each iteration (for each stage in DSVI).
The setting for SVGD is the same as in \citep{liu2016stein}.  All networks in this case are chosen to be 3-layer fully connected with $\tanh$ as the activation function. 
The learning rate of SGLD is chosen to be $0.1 / (t + 1)^{0.55}$ as suggested in \cite{WelingM2011bayesian}, and the average of the last 100 points is used for evaluation. For DSVI, the learning rate is $1e-07$ and 100 iterations is used for each stage. For SVGD, we use RBF kernel with bandwidth $h$ calculated by the "median trick" as in  \cite{liu2016stein}, and 100 particles is used for evaluation with step size being 0.05. For Fisher NS, the learning rate is 0.0001 for the discriminator and 0.0001. The $\cL_2$ constrain on the discriminator is imposed by the augmented Lagrangian as in \citet{mroueh2017fisher}. The optimization is done in TensorFlow via RMSProp. To reach convergence, KSD-NS takes more iterations (200k) compared to Fisher-NS and Stein GAN (50k). 

\end{document}